
\documentclass[english,runningheads]{llncs}

\usepackage{pgfplots}
\pgfplotsset{compat=newest}
\usetikzlibrary{plotmarks}
\usepackage{grffile}

\usepackage{latexsym}
\usepackage{babel}
\usepackage{subfig}
\usepackage{multirow}
\usepackage{nicefrac}

\usepackage[bordercolor=white,backgroundcolor=gray!30,linecolor=black,colorinlistoftodos]{todonotes}

\usepackage{wrapfig,graphicx}
\usepackage{url}

\usepackage{color}
\usepackage{colortbl}


\newcommand{\bitem}{\begin{itemize}}
\newcommand{\eitem}{\end{itemize}}

\newcommand{\R}{\mathbb{R}}

\newcommand{\bpm}{\begin{pmatrix}}      
\newcommand{\epm}{\end{pmatrix}}

\newcommand{\la}{\langle}
\newcommand{\ra}{\rangle}

\newcommand{\tmop}[1]{\ensuremath{\operatorname{#1}}}
\newcommand{\norm}[1]{\Vert #1 \Vert}

\newcommand{\bbR}{\mathbb{R}}

\providecommand{\iprod}[2]{\langle#1,#2\rangle}

\newcommand{\bi}{\begin{itemize}}
\newcommand{\ei}{\end{itemize}}

\newcommand{\geqs}{\geqslant}

\newcommand{\cref}[1]{ {\tiny[{#1}]}}
\newcommand{\tm}[1]{}

\newcommand{\Rd}{\mathbb{R}^d}

\newcommand{\beq}{\begin{equation}}
\newcommand{\eeq}{\end{equation}}
\newcommand{\beqa}{\begin{eqnarray}}
\newcommand{\eeqa}{\end{eqnarray}}
\newcommand{\bc}{\begin{center}}
\newcommand{\ec}{\end{center}}

\newcommand{\Om}{\Omega}




\newcommand \TV         {{TV}}                              

\newcommand{\Div}{\tmop{Div}}



\newtheorem{prop}{Proposition}

\newcommand{\dat}{\boldsymbol{\rho}}

\newcommand{\ul}{\boldsymbol{u}}
\newcommand{\vl}{\boldsymbol{v}}

\newcommand{\pl}{\boldsymbol{p}}
\newcommand{\ql}{\boldsymbol{q}}

\newcommand{\conv}{\mathsf{conv}}

\usepackage{subfig}
\usepackage{graphicx}
\usepackage{epsfig}
\usepackage{amsmath,amssymb} 
\usepackage{color}
\usepackage{pgfplots}
\usepackage[width=122mm,left=12mm,paperwidth=146mm,height=193mm,top=12mm,paperheight=217mm]{geometry}
\usepackage{scalefnt}
\usepackage{tikz}
\usepackage{pgfplots}
  \pgfplotsset{compat=newest}
  \usetikzlibrary{plotmarks}
  \usepackage{grffile}
\usetikzlibrary{spy,calc}

\newcommand{\figEnvelope}{
\begin{figure*}[h]
  \centering
  \subfloat[][\scriptsize Original dataterm]{
    \includegraphics[width=0.23\linewidth]{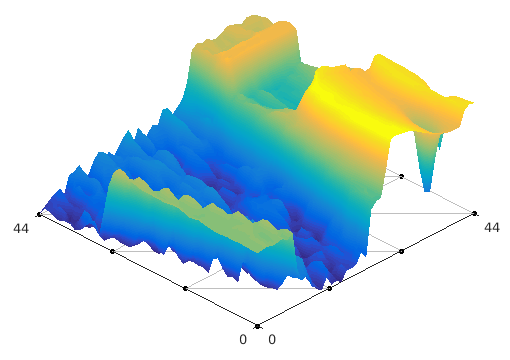}
    \label{fig:env_a}
  }
  \subfloat[][\scriptsize Without lifting]{
    \includegraphics[width=0.23\linewidth]{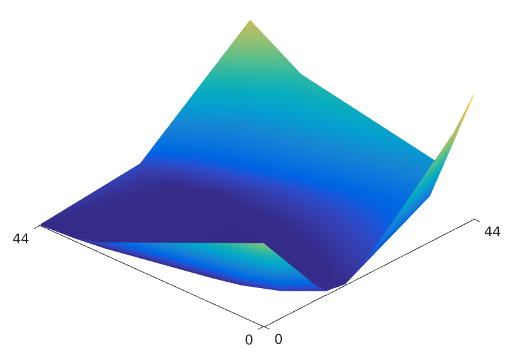}
    \label{fig:env_b}
  }
  \subfloat[][\scriptsize Classical lifting]{
    \includegraphics[width=0.23\linewidth]{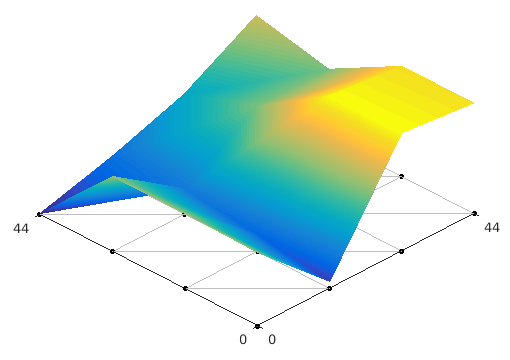}
    \label{fig:env_c}
  }
  \subfloat[][\scriptsize Proposed lifting]{
    \includegraphics[width=0.23\linewidth]{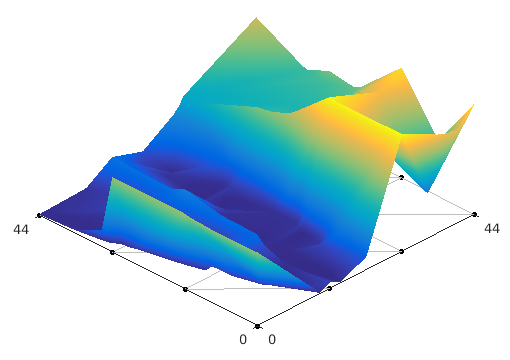}
    \label{fig:env_d}
  }
  \caption{
In (a) we show a nonconvex dataterm.
Convexification without lifting would result in the energy (b). Classical lifting methods~\cite{lellmann-et-al-iccv2013} (c), approximate the energy piecewise
    linearly between the labels, whereas the proposed method results in an approximation that is convex on each triangle (d).
   Therefore, we are able to capture the structure of the nonconvex energy
    much more accurately. 
}
  \label{fig:convex_envelope}
\end{figure*}
}

\newcommand{\figOneToOneCorresp}{
\begin{figure*}[t!]
  \centering
  \includegraphics{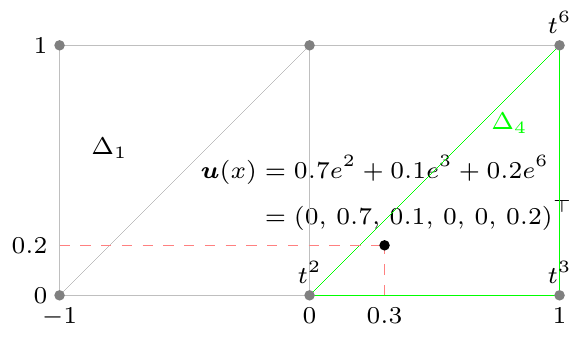}
  \caption{This figure illustrates our notation and the one-to-one
    correspondence between $u(x)=(0.3,0.2)^\top$ and the lifted
    $\ul(x)$ containing the barycentric coordinates
    $\alpha=(0.7,0.1,0.2)^\top$ of the sublabel $u(x)\in
    {\color{green} \Delta_4}=\mathrm{conv}\{t^2, t^3, t^6\}$. The triangulation $(\mathcal{V}, \mathcal{T})$ of $\Gamma=[-1;1]\times[0;1]$ is visualized via the gray lines, corresponding to the triangles and the gray dots, corresponding to the vertices $\mathcal{V} = \{(-1,0)^\top, (0,0)^\top, \dots, (1,1)^\top\}$, that we refer to as the labels.} 
  \label{fig:one_to_one}
\end{figure*}
}

\newcommand{\figBiconjGeom}{
\begin{figure*}[t!]
  \centering
  \captionsetup[subfloat]{labelformat=empty,justification=centering,singlelinecheck=false}
  \subfloat[][]{
    \includegraphics[width=0.35\linewidth]{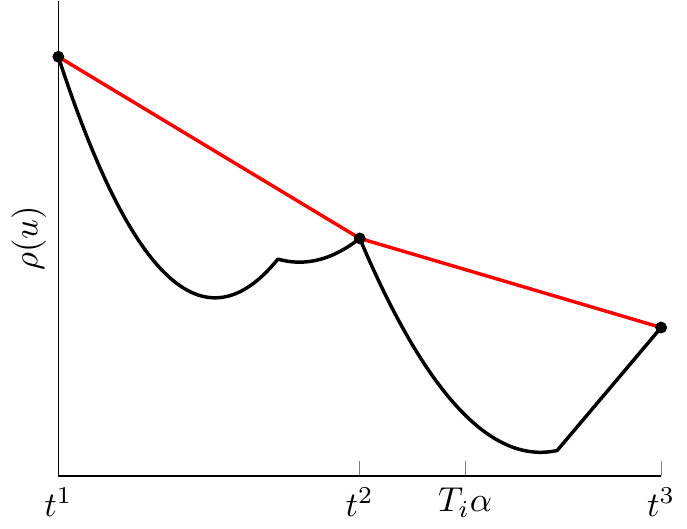}
    \label{fig:biconj_2d_standard}
  }
  \subfloat[][]{
    \includegraphics[width=0.35\linewidth]{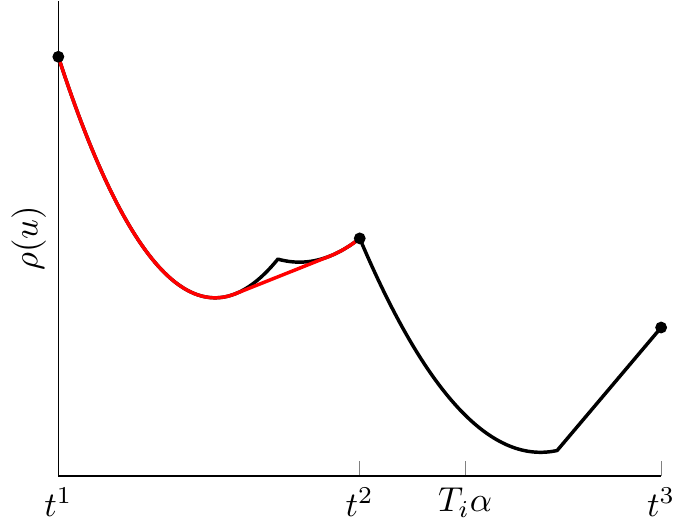}
    \label{fig:biconj_2d_standard}
  }\\
  \subfloat[][Standard lifting \cite{lellmann-et-al-iccv2013}]{
    \includegraphics[clip, trim=1.5cm 0.5cm 2.5cm 1.4cm,width=0.35\linewidth]{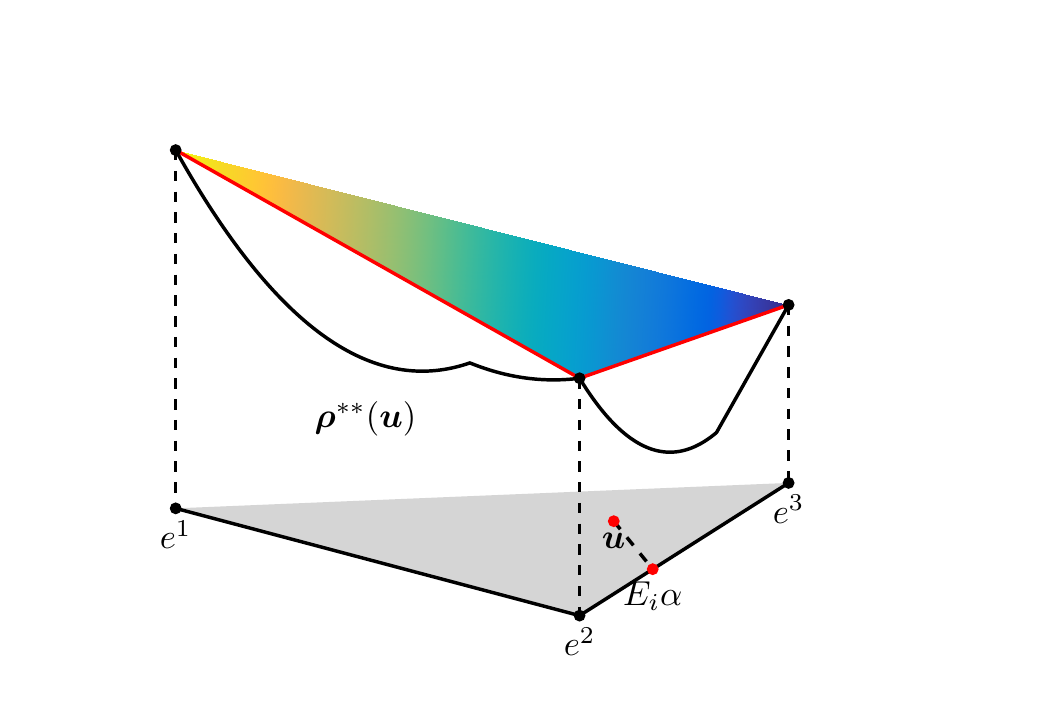}
    \label{fig:biconj_3d_standard}
  }
  \subfloat[][Proposed lifting]{
    \includegraphics[clip, trim=1.5cm 0.5cm 2.5cm 1.4cm,width=0.35\linewidth]{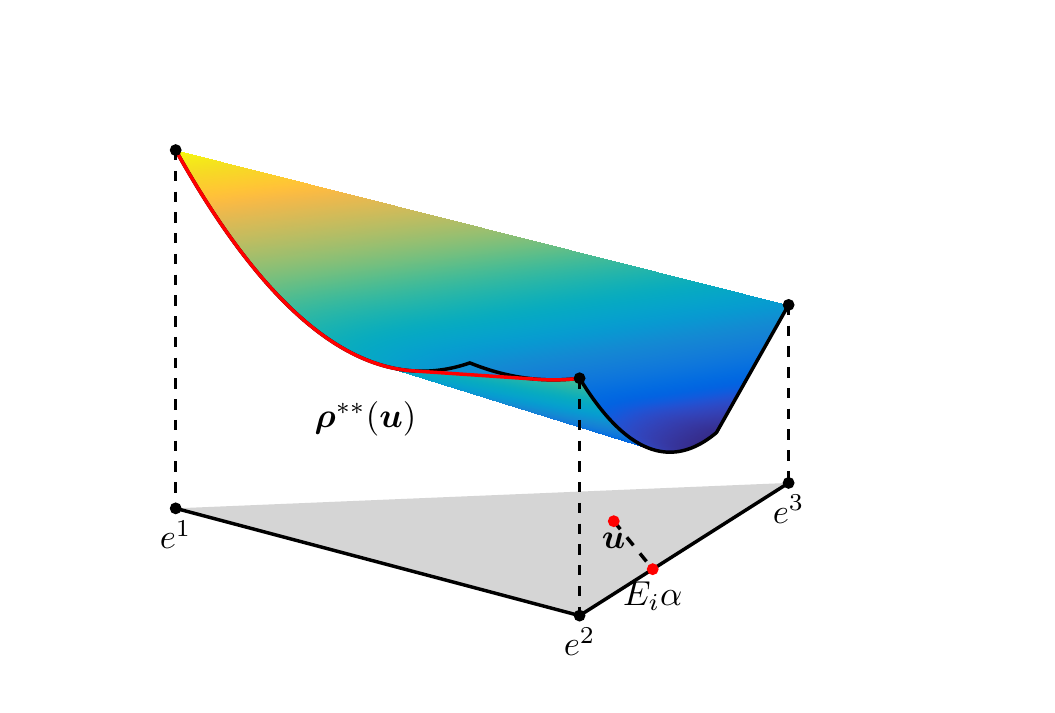}
    \label{fig:biconj_3d_standard}
  }
  \caption{Geometrical intuition for the proposed lifting and standard lifting~\cite{lellmann-et-al-iccv2013} for the special case of 1-dimensional range $\Gamma=[a,b]$ and $3$ labels $\{t^1,t^2,t^3\}$. 
  The standard lifting correponds to a linear interpolation of the
  original cost in between the locations $t^1,t^2,t^3$, which are
  associated with the vertices $e^1,e^2,e^3$ in the lifted energy
  (lower left). The proposed method extends the cost to the relaxed
  set in a more precise way: The original cost is preserved on the
  connecting lines between adjacent $e^i$ (black lines on the bottom right) up to concave parts (red graphs and lower surface on the right). This information, which may influence the exact location of the minimizer, is lost in the standard formulation. If the solution of the lifted formulation $\boldsymbol u$ is in the interior (gray area) an approximate solution to the original problem can still be obtained via Eq.~\eqref{eq:one2one_corr}.}
  \label{fig:biconj_3d}
\end{figure*}
}

\newcommand{\figSpirale}{
\begin{figure*}[t!]
  \centering
  \captionsetup[subfloat]{labelformat=empty,justification=centering,singlelinecheck=false}
  \subfloat[][Naive, 81 labels.]{
    \includegraphics[width=0.31\linewidth]{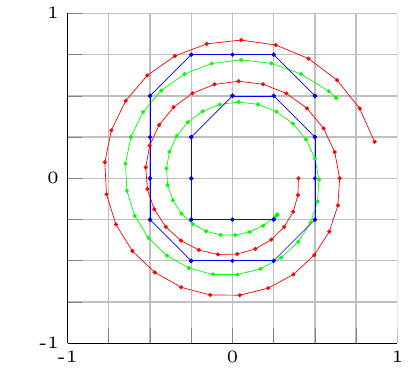}
    \label{fig:spirale_naive_l9}
  }
  \subfloat[][\cite{lellmann-et-al-iccv2013}, 81 labels.]{
    \includegraphics[width=0.31\linewidth]{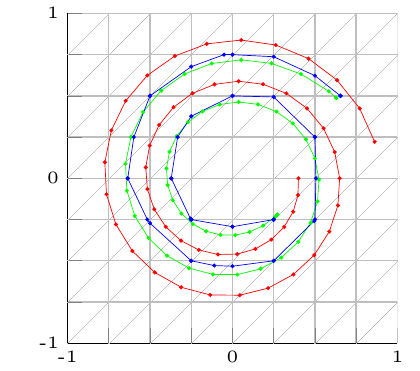}
    \label{fig:spirale_lellmann_l9}
  }
  \subfloat[][Ours, \textbf{4 labels}.]{
    \includegraphics[width=0.31\linewidth]{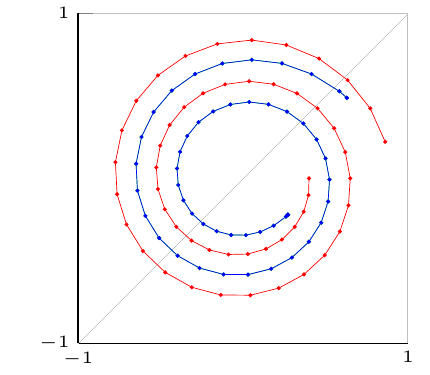}
    \label{fig:spirale_proposed_l2}
  }
  \caption{ROF denoising of a vector-valued signal $f : [0, 1] \to
    [-1, 1]^2$, discretized on 50 points (shown in red). We compare
    the proposed approach (right) with two alternative
    techniques introduced in  \cite{lellmann-et-al-iccv2013} (left and
    middle). The labels are visualized by the gray grid.  While the naive (standard) multilabel approach from
    \cite{lellmann-et-al-iccv2013} (left) provides solutions that are
    constrained to the chosen set of labels, the sublabel accurate
    regularizer from \cite{lellmann-et-al-iccv2013} (middle)
    does allow sublabel solutions, yet -- due to the dataterm bias --
    these still exhibit a strong preference for the grid points.  In
    contrast, the proposed approach does not exhibit any visible grid
    bias providing fully sublabel-accurate solutions:  With only $4$ labels, the computed solutions (shown in blue) coincide with the ``unlifted'' problem (green).
}

  \label{fig:spirale}
\end{figure*}
}

\newcommand{\figFlowBeanbags}{
\begin{figure*}[t!]
  \centering
  \subfloat[][\scriptsize Image 1 and 2]{
    \includegraphics[width=0.31\linewidth]{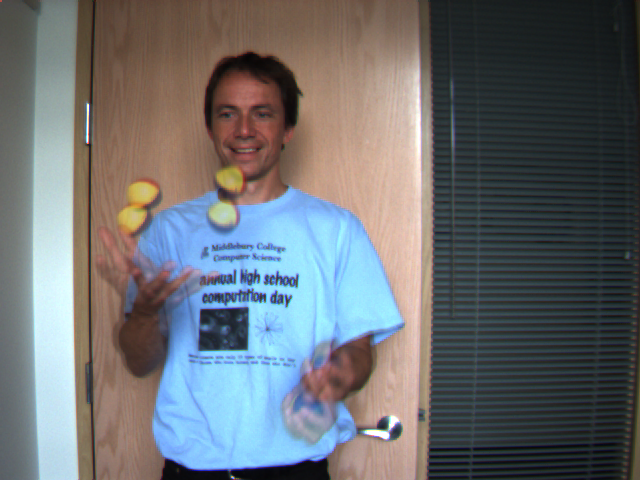}
    \label{fig:beanbags_input}
  }
  \subfloat[][\scriptsize Proposed, $\left\vert{\mathcal{V}}\right\vert=2 \times 2$]{
    \includegraphics[width=0.31\linewidth]{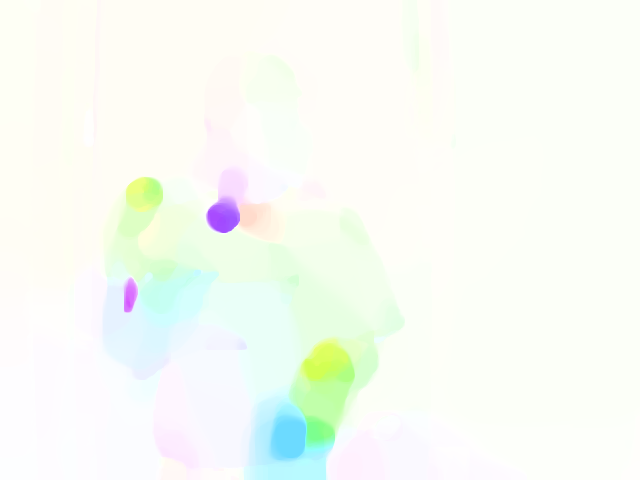}
    \label{fig:beanbags_flow_proposed}
  }
  \subfloat[][\scriptsize Baseline, $\left\vert{\mathcal{V}}\right\vert=7 \times 7$]{
    \includegraphics[width=0.31\linewidth]{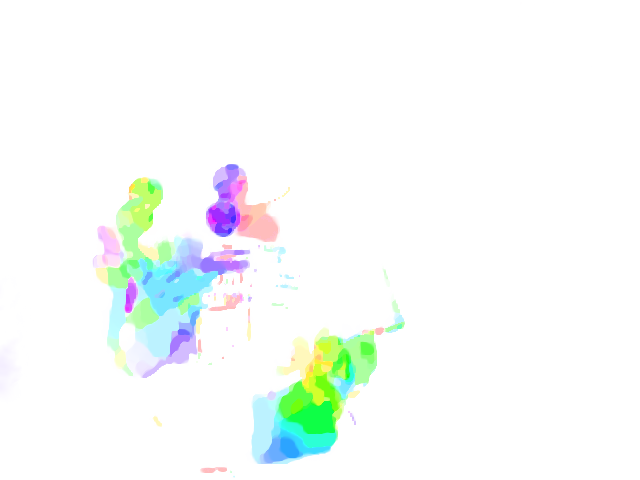}
    \label{fig:beanbags_flow_baseline}
  }
  \caption{Large displacement flow between two $640\times 480$ images (a) using a $81 \times 81$ search window. The result of our method with $4$ labels is shown in (b), the baseline \cite{lellmann-et-al-iccv2013} in (c). Our method can correctly identify the large motion.}
  \label{fig:flow_large_disp}
\end{figure*}
}

\newcommand{\figFlowGrove}{
\begin{figure*}[t!]
  \centering
  \captionsetup[subfloat]{labelformat=empty,justification=centering,singlelinecheck=false}
  \subfloat[][\scriptsize Image 1]{
    \includegraphics[width=0.185\linewidth]{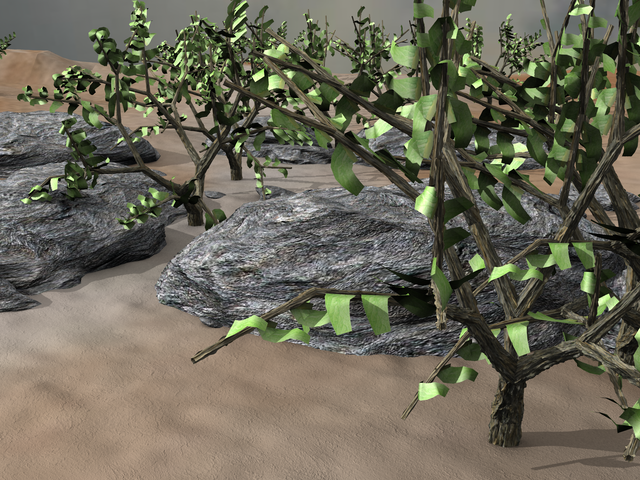}
  }
  \subfloat[][\scriptsize \cite{goldluecke-et-al-siam-2013},
  $\left\vert{\mathcal{V}}\right\vert=5\times 5$,\\ $0.67$ GB, $4$ min\\$\text{aep}=2.78$]{
    \includegraphics[width=0.185\linewidth]{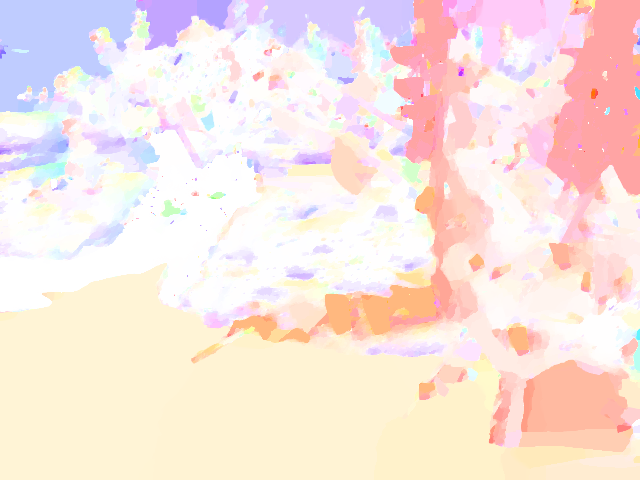}
  }
  \subfloat[][\scriptsize \cite{goldluecke-et-al-siam-2013},
  $\left\vert{\mathcal{V}}\right\vert=11\times11$,\\ $2.1$ GB, $12$
  min\\$\text{aep}=1.97$]{
    \includegraphics[width=0.185\linewidth]{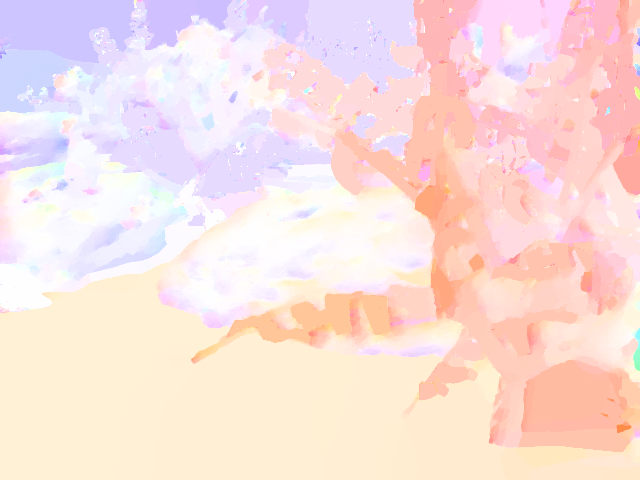}
  }
  \subfloat[][\scriptsize \cite{goldluecke-et-al-siam-2013},
  $\left\vert{\mathcal{V}}\right\vert=17\times17$,\\ $4.1$ GB, $25$
  min \\$\text{aep}=1.63$]{
    \includegraphics[width=0.185\linewidth]{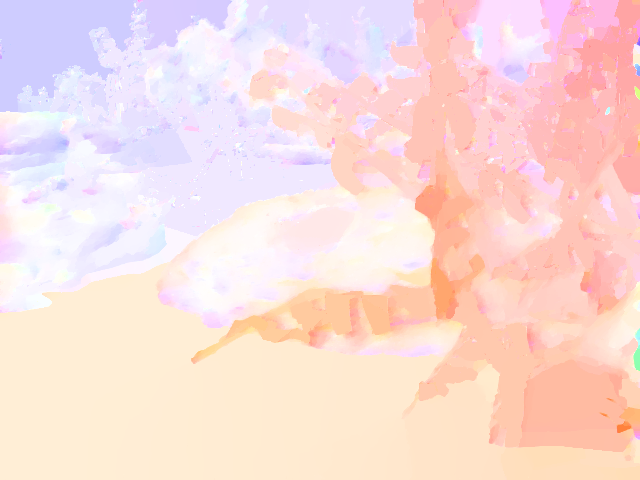}
  }
  \subfloat[][\scriptsize \cite{goldluecke-et-al-siam-2013},
  $\left\vert{\mathcal{V}}\right\vert=28\times28$,\\ $9.3$ GB, $60$
  min \\$\text{aep}=1.39$]{
    \includegraphics[width=0.185\linewidth]{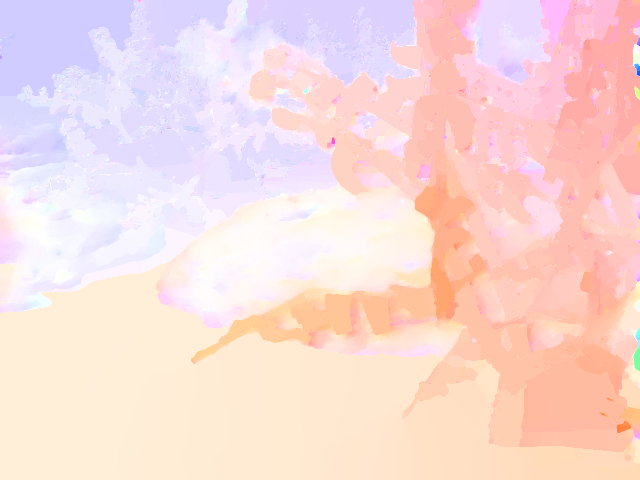}
  } \\

  \subfloat[][\scriptsize Image 2]{
    \includegraphics[width=0.185\linewidth]{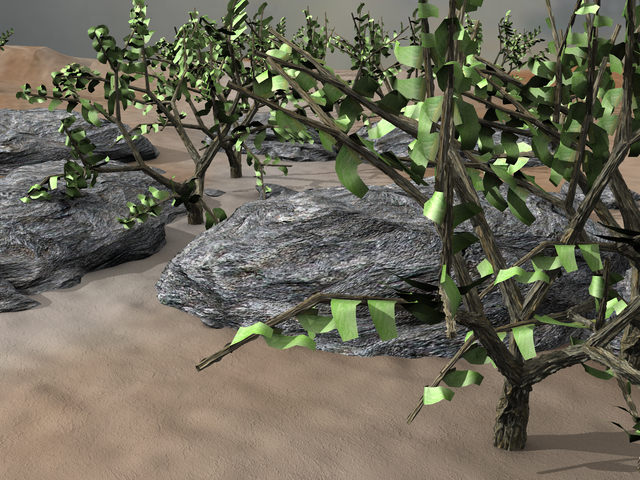}
  }
  \subfloat[][\scriptsize \cite{lellmann-et-al-iccv2013},
  $\left\vert{\mathcal{V}}\right\vert=3\times 3$,\\ $0.67$ GB, $0.35$ min\\$\text{aep}=5.44$]{
    \includegraphics[width=0.185\linewidth]{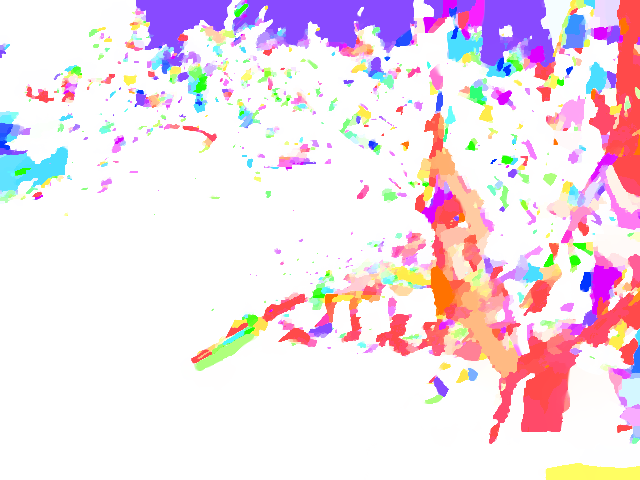}
  }
  \subfloat[][\scriptsize \cite{lellmann-et-al-iccv2013},
  $\left\vert{\mathcal{V}}\right\vert=5\times5$,\\ $2.4$ GB, $16$
  min\\$\text{aep}=4.22$]{
    \includegraphics[width=0.185\linewidth]{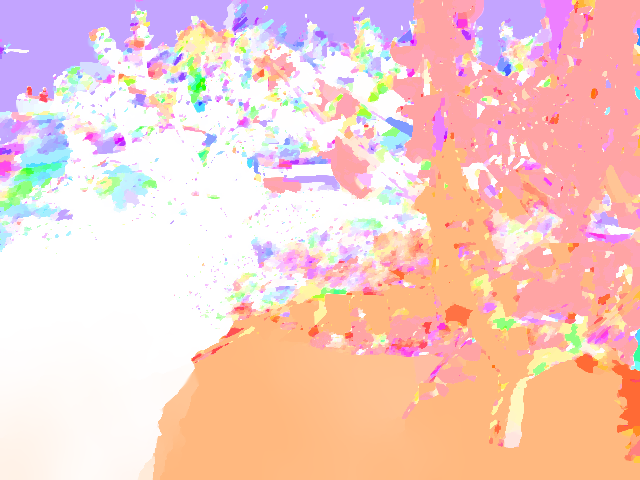}
  }
  \subfloat[][\scriptsize \cite{lellmann-et-al-iccv2013},
  $\left\vert{\mathcal{V}}\right\vert=7\times7$,\\ $5.2$ GB, $33$
  min \\$\text{aep}=2.65$]{
    \includegraphics[width=0.185\linewidth]{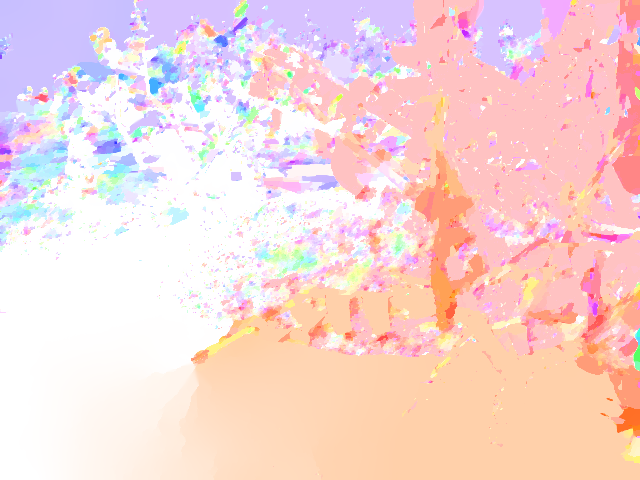}
  }
  \subfloat[][\scriptsize \cite{lellmann-et-al-iccv2013},
  $\left\vert{\mathcal{V}}\right\vert=9\times9$,\\ Out of memory.]{
    \includegraphics[width=0.185\linewidth]{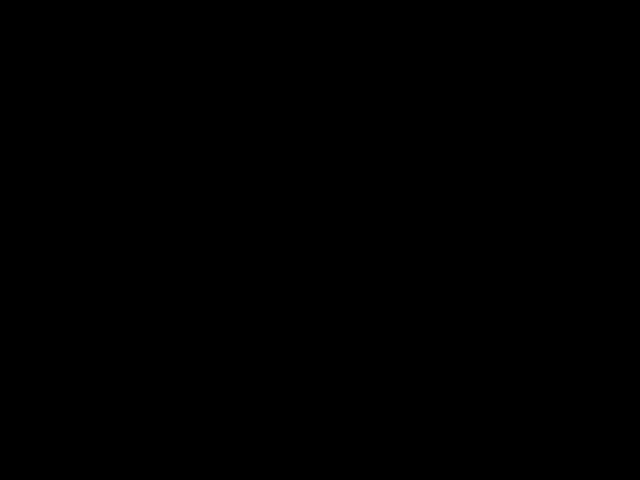}
  }
  \\
  \subfloat[][\scriptsize Ground truth]{
    \includegraphics[width=0.185\linewidth]{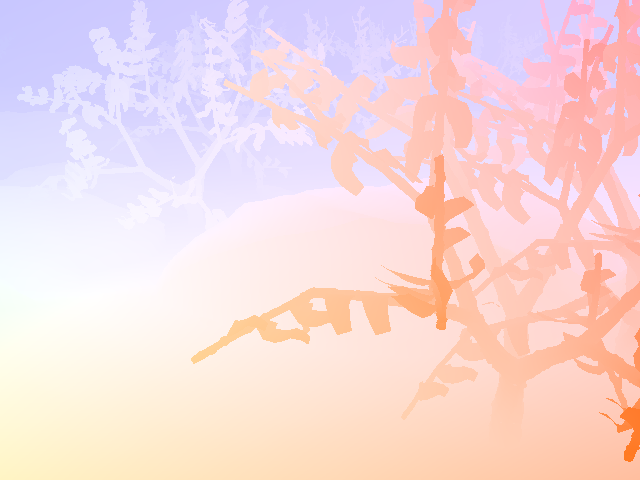}
  }
  \subfloat[][\scriptsize Ours,
  $\left\vert{\mathcal{V}}\right\vert=2\times 2$,\\ $0.63$ GB, $17$
  min \\$\text{aep}=1.28$]{
    \includegraphics[width=0.185\linewidth]{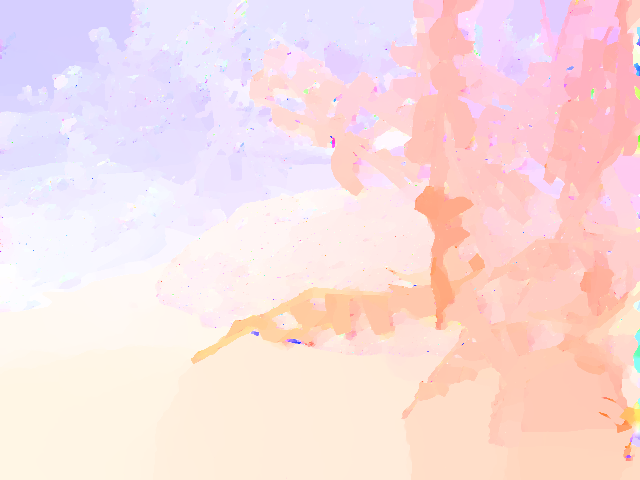}
  }
  \subfloat[][\scriptsize Ours,
  $\left\vert{\mathcal{V}}\right\vert=3\times 3$,\\ $1.9$ GB, $34$ min
  \\$\text{aep}=1.07$]{
    \includegraphics[width=0.185\linewidth]{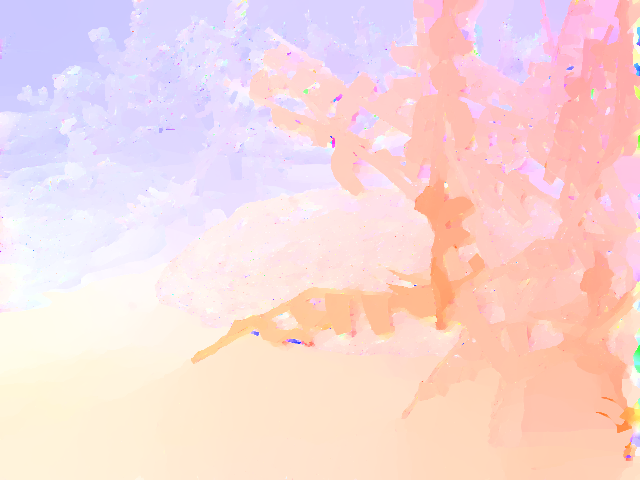}
  }
  \subfloat[][\scriptsize Ours,
  $\left\vert{\mathcal{V}}\right\vert=4\times 4$,\\ $4.1$ GB, $41$ min
  \\$\text{aep}=0.97$]{
    \includegraphics[width=0.185\linewidth]{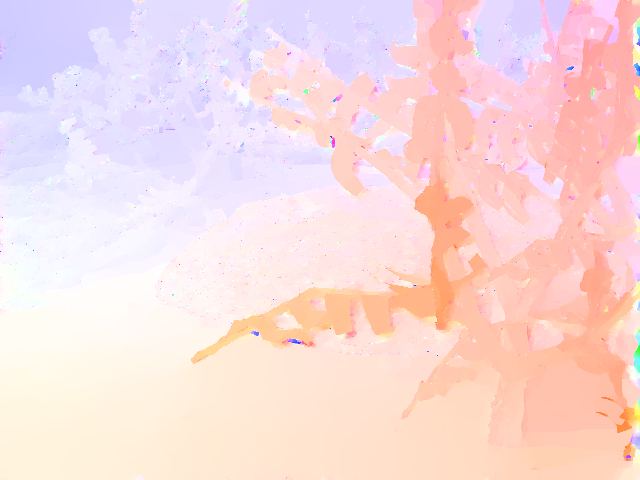}
  }
  \subfloat[][\scriptsize Ours,
  $\left\vert{\mathcal{V}}\right\vert=6\times 6$,\\ $10.1$ GB,
  $56$
  min \\$\text{aep}=0.9$]{
    \includegraphics[width=0.185\linewidth]{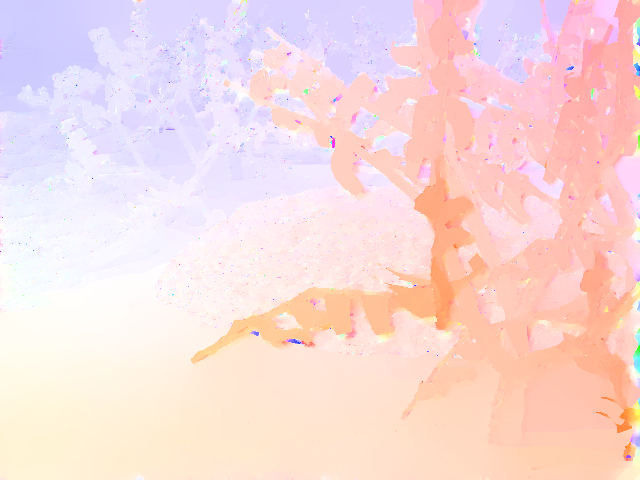}
  }

  \caption{We compute the optical flow using
    our method, the product space approach
    \cite{goldluecke-et-al-siam-2013} and the baseline method
    \cite{lellmann-et-al-iccv2013} for a varying amount of
    labels and compare the average endpoint error (aep). 
    The product space method clearly outperforms the baseline,
    but our approach finds the overall best result already with $2 \times 2$ labels.
    To achieve a similarly precise result as the product
    space method, we require $150$ times fewer labels, $10$ times less memory and $3$ times
    less time. For the same number of labels, the proposed approach requires more memory as it
    has to store a convex approximation of the energy instead of a linear one.
}
  \label{fig:flow_grove}
\end{figure*}
}

\newcommand{\figROF}{
\begin{figure*}[t!]
  \centering
  \captionsetup[subfloat]{labelformat=empty,justification=centering,singlelinecheck=false}
  \subfloat[][\scriptsize Input image]{
    \includegraphics[width=0.18\linewidth]{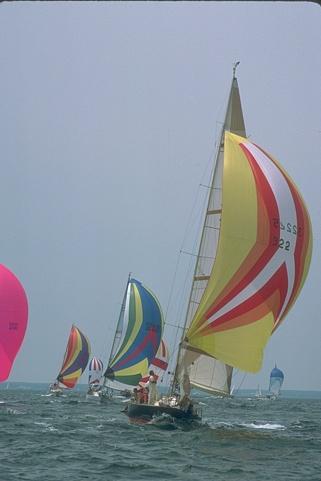}
  }
  \subfloat[][\scriptsize Unlifted Problem,\\ ~\\$E=992.50$]{
    \includegraphics[width=0.18\linewidth]{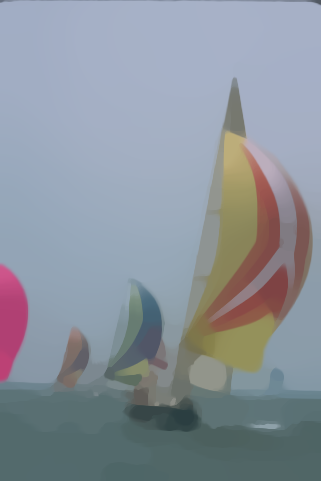}
  }
  \subfloat[][\scriptsize Ours, $\left\vert{\mathcal{T}}\right\vert=1$, \\$\left\vert{\mathcal{V}}\right\vert=4$, \\$E=992.51$]{
    \includegraphics[width=0.18\linewidth]{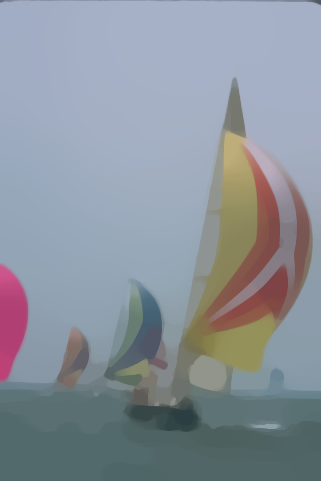}
  }
  \subfloat[][\scriptsize Ours, $\left\vert{\mathcal{T}}\right\vert=6$ \\$\left\vert{\mathcal{V}}\right\vert=2\times2\times2$ \\$E=993.52$]{
    \includegraphics[width=0.18\linewidth]{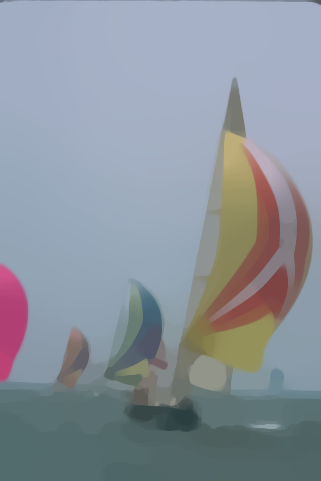}
  }
  \subfloat[][\scriptsize Baseline, \\ $\left\vert{\mathcal{V}}\right\vert=4\times4\times4$, \\ $E=2255.81$]{
    \includegraphics[width=0.18\linewidth]{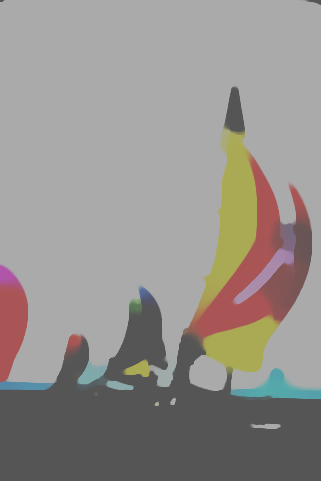}
  }\\
  \caption{Convex ROF with vectorial TV. Direct optimization and proposed method yield the same result. In contrast to the baseline method~\cite{lellmann-et-al-iccv2013} the proposed approach has no discretization artefacts and yields a lower energy. The regularization parameter is chosen as $\lambda=0.3$.}
  \label{fig:rof}
\end{figure*}
}

\newcommand{\figRobustROF}{
\begin{figure*}[t!]
  \centering
  \captionsetup[subfloat]{labelformat=empty,justification=centering,singlelinecheck=false}
  \subfloat[][\scriptsize Noisy input]{
    \includegraphics[width=0.18\linewidth]{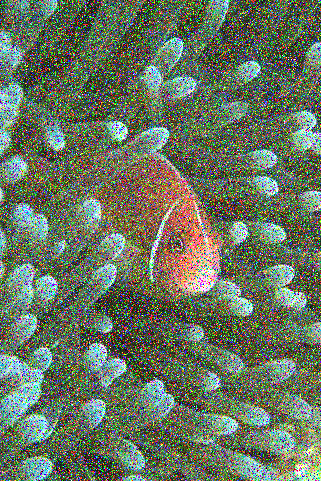}
  }
  \subfloat[][\scriptsize Ours, $\left\vert{\mathcal{T}}\right\vert=1$, $\left\vert{\mathcal{V}}\right\vert=4$, \\$E=2849.52$]{
    \includegraphics[width=0.18\linewidth]{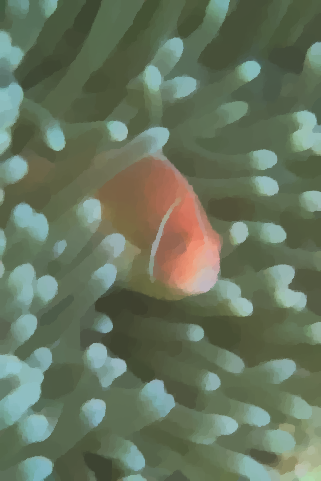}
  }
  \subfloat[][\scriptsize Ours, $\left\vert{\mathcal{T}}\right\vert=6$, $\left\vert{\mathcal{V}}\right\vert=2\times2\times2$, \\$E=2806.18$]{
    \includegraphics[width=0.18\linewidth]{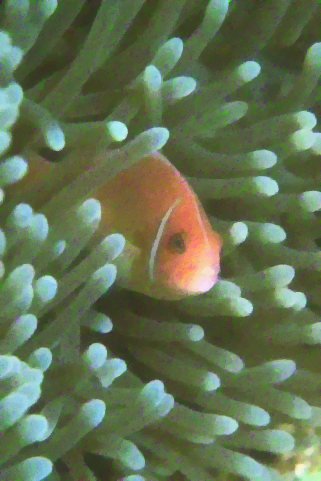}
  }
  \subfloat[][\scriptsize Ours, $\left\vert{\mathcal{T}}\right\vert=48$, $\left\vert{\mathcal{V}}\right\vert=3\times3\times3$, \\$E=2633.83$]{
    \includegraphics[width=0.18\linewidth]{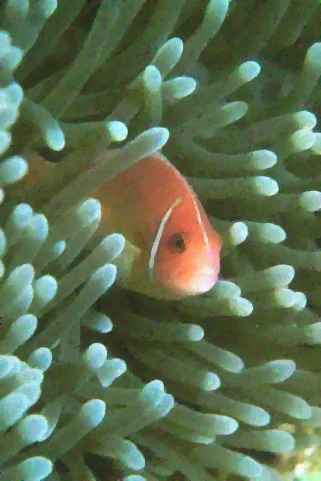}
  }
  \subfloat[][\scriptsize Baseline, $\left\vert{\mathcal{V}}\right\vert=4\times4\times4$, \\$E=3151.80$]{
    \includegraphics[width=0.18\linewidth]{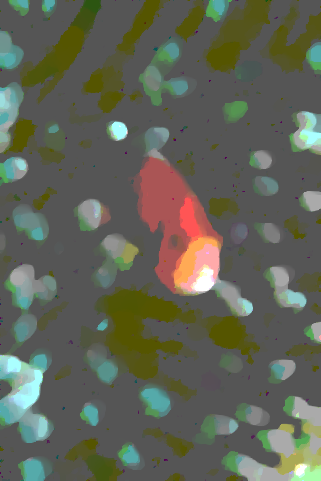}
  }\\
  \caption{ROF with a truncated quadratic dataterm ($\lambda=0.03$ and $\nu=0.025$). Compared to the baseline method~\cite{lellmann-et-al-iccv2013} the proposed approach yields much better results, already with a very small number of $4$ labels.}
  \label{fig:robust_rof}
\end{figure*}
}

\newcommand{\figProofIllustration}{
\begin{figure*}[t!]
  \centering
  \includegraphics{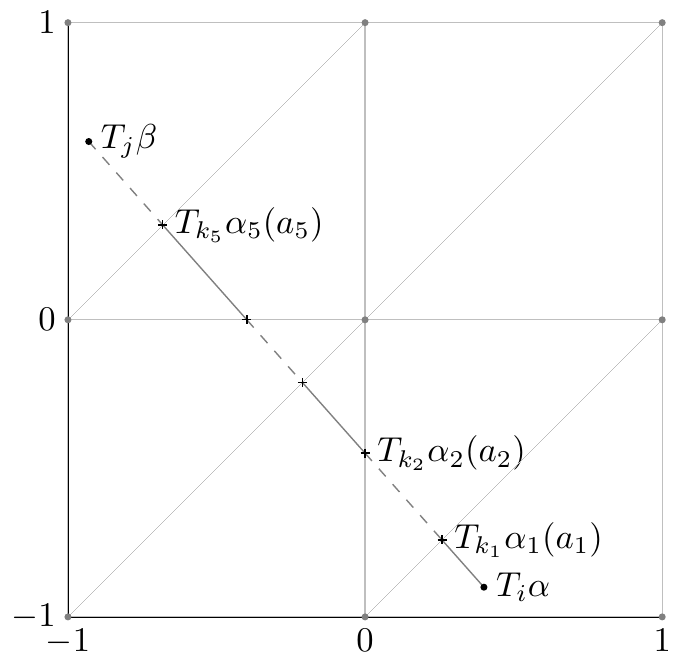}
  \caption{Figure illustrating the second direction of the proof of Proposition 4. The gray dots and lines visualize the triangulation $(\mathcal{V}, \mathcal{T})$. The line segment between $T_i \alpha$ and $T_j \beta$ is composed of shorter line segments which are fully contained in one of the triangles. On each of the triangles the inequality \eqref{eq:helper} holds, which allows to conclude that it holds for the whole line segment.}
  \label{fig:proof_illustration}
\end{figure*}
}

\newcommand{\figCartoonTextDecomp}{
\begin{figure*}[t!]
  \centering
  \captionsetup[subfloat]{labelformat=empty,justification=centering,singlelinecheck=false}
  \subfloat[][\scriptsize Input image]{
    \includegraphics[width=0.32\linewidth]{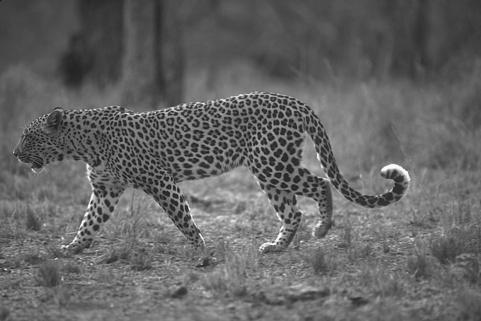}
  }
  \subfloat[][\scriptsize Mean $\mu$]{
    \includegraphics[width=0.32\linewidth]{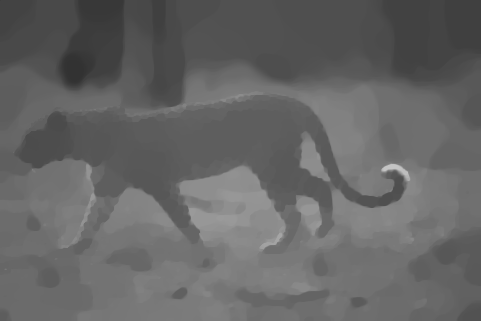}
  }
  \subfloat[][\scriptsize Variance $\sigma$]{
    \includegraphics[width=0.32\linewidth]{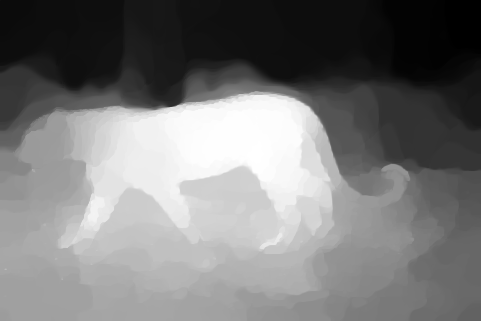}
  }\\
  \caption{Joint estimation of mean and variance. Our formulation can optimize difficult nonconvex joint optimization
    problems with continuous label spaces.}
  \label{fig:cartoon-text-decomp}
\end{figure*}
}

\begin{document}

\pagestyle{headings}
\mainmatter
\def\ECCV16SubNumber{189}  





\title{Sublabel-Accurate Convex Relaxation of Vectorial Multilabel Energies}
\titlerunning{Sublabel-Accurate Convex Relaxation of Vectorial Multilabel Energies}
\authorrunning{E. Laude \and T. M\"{o}llenhoff \and M. Moeller\and J. Lellmann \and D. Cremers}
\newcommand*\samethanks[1][\value{footnote}]{\footnotemark[#1]}
\author{Emanuel Laude\thanks{These authors contributed equally.}$^1$ \and Thomas M\"ollenhoff\samethanks$^1$  \and Michael Moeller$^1$ \and \\ Jan Lellmann$^2$ \and Daniel Cremers$^1$}
\institute{$^1$Technical University of Munich\thanks{This work was supported by the ERC Starting Grant ``Convex Vision''. } \qquad $^2$University of L\"ubeck}

\maketitle

\begin{abstract}
  Convex relaxations of multilabel problems have been
  demonstrated to produce provably optimal or near-optimal 
  solutions to a variety of computer vision problems.  Yet,
  they are of limited practical use as they require a fine
  discretization of the label space, entailing a huge demand in memory
  and runtime.  In this work, we propose the first sublabel accurate
  convex relaxation for vectorial multilabel problems.  Our key idea
  is to approximate the dataterm in a piecewise convex (rather than piecewise linear) manner.
  As a result we have a more faithful approximation of the original
  cost function that provides a meaningful interpretation for 
  fractional solutions of the relaxed convex problem.  

\keywords{Convex Relaxation, Optimization, Variational Methods}
\end{abstract}

\figEnvelope

\section{Introduction}
\subsection{Nonconvex Vectorial Problems}
In this paper, we derive a sublabel-accurate convex relaxation for
vectorial optimization problems of the form 
\begin{equation}
  \underset{u : \Omega \to \Gamma} \min ~ \int_{\Omega} \rho\big(x, u(x)\big) \,\mathrm{d}x \,+\, \lambda \,TV(u),
  \label{eq:unlifted_problem}
\end{equation}
where $\Omega \subset \bbR^d$, $\Gamma \subset \bbR^n$ and $\rho :
\Omega \times \Gamma \to \bbR$ denotes a generally nonconvex pointwise
dataterm. As regularization we focus on the \emph{total
  variation} defined as: 
\begin{equation}
  TV(u) = \underset{q \in C_c^{\infty}(\Omega, \bbR^{n \times d}),
    \norm{q(x)}_{S^{\infty}} \leq 1} \sup ~ \int_{\Omega}
  \iprod{u}{\Div q} ~ \mathrm{d}x,
  \label{eq:total_variation}
\end{equation}
where $\norm{\cdot}_{S^{\infty}}$ is the Schatten-$\infty$ norm on
$\bbR^{n \times d}$, i.e., the largest singular value. 
For differentiable functions $u$ we can integrate
\eqref{eq:total_variation} by parts to find
\begin{equation}
  TV(u) = \int_{\Omega} \norm{\nabla u(x)}_{S^1} ~ \mathrm{d}x,
\end{equation}
where the dual norm $\norm{\cdot}_{S^1}$ penalizes the sum of the
singular values of the Jacobian, which encourages the individual components 
of $u$ to jump in the same direction. This type of regularization
is part of the framework of Sapiro and Ringach \cite{Sap96}.



\subsection{Related Work}
Due to its nonconvexity the optimization of
\eqref{eq:unlifted_problem} is challenging.  For the scalar
case ($n=1$), Ishikawa \cite{Ishikawa} proposed a pioneering technique
to obtain globally optimal solutions in a spatially discrete setting,
given by the minimum s-t-cut of a graph representing the space
$\Omega\times\Gamma$.  A continuous formulation was
introduced by Pock et al.~\cite{PockECCV} exhibiting several
advantages such as less grid bias and parallelizability.

In a series of papers \cite{PCBC-SIIMS,PCBC-ICCV09},
connections of the above approaches were made to the mathematical
theory of \emph{cartesian currents} \cite{GMS-CC} and the calibration
method for the Mumford-Shah functional \cite{Alberti-et-al-03},
leading to a generalization of the convex relaxation framework
\cite{PockECCV} to more general (in particular nonconvex)
regularizers.

In the following, researchers have strived to generalize the concept
of functional lifting and convex relaxation to the vectorial setting
($n > 1$).  If the dataterm and the regularizer are both separable in
the label dimension, one can simply apply the above convex
relaxation approach in a channel-wise manner to each component
separately.  But when either the dataterm or the regularizer
couple the label components, the situation becomes more complex \cite{goldluecke-et-al-siam-2013,strekalovskiy-et-al-siims14}.



The approach which is most closely related to our work, and which we
consider as a baseline method, is the one by Lellmann et
al.~\cite{lellmann-et-al-iccv2013}. They consider coupled dataterms
with coupled total variation regularization of the form \eqref{eq:total_variation}.

A drawback shared by all mentioned papers is that ultimately one
has to discretize the label space. While Lellmann et
al.~\cite{lellmann-et-al-iccv2013} propose a sublabel-accurate
regularizer, we show that their dataterm leads to solutions which
still have a strong bias towards the label grid.
For the scalar-valued setting, continuous label spaces have been
considered in the MRF community by Zach et
al.~\cite{Zach-Kohli-eccv12} and Fix et al.~\cite{Fix-eccv14}. The
paper \cite{Zach-aistats13} proposes a method for mixed continuous and
discrete vectorial label spaces, where everything is derived in the
spatially discrete MRF setting.  M\"{o}llenhoff et
al.~\cite{moellenhoff-laude-cvpr-2016} recently proposed a novel
formulation of the scalar-valued case which retains fully continuous
label spaces even after discretization. The contribution of this work is
to extend \cite{moellenhoff-laude-cvpr-2016} to vectorial label
spaces, thereby complementing \cite{lellmann-et-al-iccv2013} with a
sublabel-accurate dataterm.


\subsection{Contribution}
In this work we propose the first sublabel-accurate convex formulation
of vectorial labeling problems.  It generalizes the formulation for
scalar-valued labeling problems~\cite{moellenhoff-laude-cvpr-2016} and thus includes
important applications such as optical flow estimation or color image denoising.  We show that our method, derived in a spatially continuous
setting, has a variety of interesting theoretical properties as well
as practical advantages over the existing labeling approaches:
\begin{itemize}
\item We generalize existing functional lifting
  approaches (see Sec.~\ref{sec:convex_dataterm}).
\item We show that our method is the best convex
  under-approximation (in a local sense), see Prop.~\ref{prop:conv_env_data} and Prop.~\ref{prop:conv_env_reg}.
\item Due to its sublabel-accuracy our method requires only a small
  amount of labels to produce good results which leads to a
  drastic reduction in memory.  We believe that this is a
  vital step towards the real-time capability of lifting and convex
  relaxation methods.  Moreover, our method eliminates the label bias,
  that previous lifting methods suffer from, even for many labels.
\item In Sec.~\ref{sec:regularizer} we propose a regularizer that couples the
  different label components by enforcing a joint jump normal. This is in contrast to~\cite{goldluecke-et-al-siam-2013}, where the components are regularized
  separately.
\item For convex dataterms, our method is equivalent to the unlifted problem -- see Prop.~\ref{prop:equivalency_unlifted}.
  Therefore, it allows a seamless transition between direct
  optimization and convex relaxation approaches.
\end{itemize}

\subsection{Notation}
We write $\iprod{x}{y} = \sum_i x_i y_i$ for the standard inner
product on $\bbR^n$ or the Frobenius product if $x,y$ are
matrices. Similarly $\norm{\cdot}$ without any subscript denotes 
the usual Euclidean norm, respectively the Frobenius norm for matrices. 
 
We denote the convex conjugate
of a function $f : \bbR^n \to \bbR \cup \{ \infty \}$ by
$f^*(y) = \sup_{x \in \bbR^n} ~ \iprod{y}{x} - f(x)$. It is an
important tool for devising convex relaxations,
as the biconjugate $f^{**}$ is the largest lower-semicontinuous
(lsc.) convex function below $f$. 
For the indicator function of a set $C$ we write $\delta_C$, i.e.,
$\delta_C(x) = 0$ if $x \in C$ and $\infty$ otherwise. $\Delta_n^U
\subset \R^n$ stands for the unit $n$-simplex. 
\section{Convex Formulation}
\subsection{Lifted Representation}
\label{sec:lifting}
Motivated by Fig.~\ref{fig:convex_envelope}, we construct an equivalent
representation of \eqref{eq:unlifted_problem} in a higher dimensional
space, before taking the convex envelope.

Let $\Gamma \subset \bbR^n$ be a compact and convex set. We partition
$\Gamma$ into a set $\mathcal{T}$ of $n$-simplices $\Delta_i$ so that
$\Gamma$ is a disjoint union of $\Delta_i$ up to a set of measure
zero. Let $t^{i_j}$ be the $j$-th vertex of $\Delta_i$ and denote by $\mathcal{V}= \{t^1,\ldots,t^{\left\vert{\mathcal{V}}\right\vert}\}$ the union of all vertices, referred to as labels, with $1\leq i \leq \left\vert{\mathcal{T}}\right\vert$, $1 \leq j \leq n+1$ and $1 \leq i_j \leq \left\vert{\mathcal{V}}\right\vert$. For $u : \Omega \to \Gamma$, we refer to $u(x)$ as a \emph{sublabel}. Any sublabel can be written as a convex combination of the vertices of a simplex $\Delta_i$ with $1\leq i \leq \left\vert{\mathcal{T}}\right\vert$ for appropriate barycentric coordinates $\alpha \in \Delta_n^U$:
\begin{equation}
u(x)=T_i \alpha:= \sum_{j=1}^{n+1}\alpha_j t^{i_j}, ~ T_i:=(t^{i_1},\ t^{i_2},\ \dots, \ t^{i_{n+1}}) \in \bbR^{n \times n+1}.
\end{equation}
By encoding the vertices $t^k \in \mathcal{V}$ using a one-of-$\left\vert{\mathcal{V}}\right\vert$ representation $e^k$ we can identify any $u(x) \in \Gamma$ with a sparse vector $\ul(x)$ containing at least $\left\vert{\mathcal{V}}\right\vert-n$ many zeros and vice versa:
\begin{equation}
\begin{split}
\ul(x)&=E_i \alpha  :=\sum_{j=1}^{n+1}\alpha_j e^{i_j}, ~ E_i:=(e^{i_1}, \ e^{i_2}, \dots, \ e^{i_{n+1}}) \in \bbR^{\left\vert{\mathcal{V}}\right\vert \times n+1}, \\
u(x)&= \sum_{k=1}^{\left\vert{\mathcal{V}}\right\vert} t^k \ul_k(x), ~ \alpha \in \Delta_n^U, ~ 1\leq i \leq \left\vert{\mathcal{T}}\right\vert.
\label{eq:one2one_corr}
\end{split}
\end{equation}
The entries of the vector $e^{i_j}$ are zero except for the $(i_j)$-th entry, which is equal to one.
We refer to $\ul : \Omega \to \bbR^{\left\vert{\mathcal{V}}\right\vert}$ as the \emph{lifted} representation of $u$.
This one-to-one-correspondence between $u(x) = T_i \alpha$ and $\ul(x) = E_i \alpha$ is shown in Fig.~\ref{fig:one_to_one}. Note that both, $\alpha$ and $i$ depend on $x$. However, for notational convenience we drop the dependence on $x$ whenever we consider a fixed point $x \in \Omega$.
\figOneToOneCorresp

\subsection{Convexifying the Dataterm}
\label{sec:convex_dataterm}
Let for now the weight of the regularizer in~\eqref{eq:unlifted_problem} be zero. Then, at each point $x \in \Omega$ we minimize a generally nonconvex energy over a compact set $\Gamma \subset \bbR^n$:
\begin{equation}
  \underset{u \in \Gamma} \min ~ \rho(u).
  \label{eq:dataterm_min_simple}
\end{equation}
\figBiconjGeom
We set up the lifted energy so that it attains finite values if and only if the argument $\ul$ is a sparse representation $\ul = E_i \alpha$ of a sublabel $u \in \Gamma$:
\begin{equation}
  \dat(\ul) = \min_{1\leq i \leq \left\vert{\mathcal{T}}\right\vert} ~ \dat_i(\ul), \qquad   \dat_i(\ul) = 
  \begin{cases}
    \rho(T_i \alpha), \qquad &\text{if } ~ \ul = E_i \alpha, ~ \alpha \in \Delta_n^U,\\
    \infty, & \text{otherwise.}
  \end{cases}
  \label{eq:dataterm_lifted}
\end{equation}
Problems~\eqref{eq:dataterm_min_simple} and~\eqref{eq:dataterm_lifted} are equivalent due to the one-to-one correspondence of $u=T_i \alpha$ and $\ul = E_i \alpha$. However, energy \eqref{eq:dataterm_lifted} is finite on a nonconvex set only. In order to make optimization tractable, we minimize its convex envelope.
\begin{prop}
  The convex envelope of \eqref{eq:dataterm_lifted} is given as:
  \begin{equation} \label{eq:dataterm_sublabel_biconj}
  \begin{split}
    \boldsymbol \rho^{**}(\boldsymbol u) &= \sup_{\boldsymbol v \in \R^{\left\vert{\mathcal{V}}\right\vert}} \la \boldsymbol u, \boldsymbol v\ra - \max_{1 \leq i \leq \left\vert{\mathcal{T}}\right\vert} ~ \dat_i^*(\boldsymbol v), \\
    \dat_i^*(\boldsymbol v)&= \la E_i b_i, \boldsymbol v \ra + \rho_i^*(A_i^\top E_i^\top \boldsymbol v), \quad \rho_i := \rho + \delta_{\Delta_i}.
   \end{split}
  \end{equation}
$b_i$ and $A_i$ are given as $b_i := M_i^{n+1}$, $A_i := \left(M_i^1, ~ M_i^2,~ \dots,~ M_i^n \right)$, 
where $M_i^j$ are the columns of the matrix $M_i:=(T_i^\top, \mathbf{1})^{-\top} \in \R^{{n+1} \times {n+1}}$.
\label{prop:conv_env_data}
\end{prop}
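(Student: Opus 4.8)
The plan is to exploit the standard fact, recalled in the notation section, that the biconjugate $\dat^{**}$ is exactly the convex envelope (the largest lsc.\ convex function below $\dat$), so it suffices to compute $\dat^*$ and then conjugate once more. The stated formula for $\dat^{**}$ is precisely $(\dat^*)^*$ once we establish that $\dat^* = \max_i \dat_i^*$, so the proof splits into two essentially independent pieces: the pointwise-minimum identity for the conjugate, and the closed-form evaluation of each $\dat_i^*$.

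First I would handle the outer structure. Since $\dat = \min_{1\le i \le |\mathcal{T}|} \dat_i$ is a pointwise minimum over a \emph{finite} index set,
\[
  \dat^*(\vl) = \sup_{\ul} \Big[ \la \ul,\vl\ra - \min_i \dat_i(\ul)\Big]
  = \sup_{\ul} \max_i \big[\la\ul,\vl\ra - \dat_i(\ul)\big]
  = \max_i \dat_i^*(\vl),
\]
where the interchange of $\sup_{\ul}$ and $\max_i$ is valid because the index set is finite. Conjugating once more immediately gives $\dat^{**}(\ul) = \sup_\vl \la\ul,\vl\ra - \max_i \dat_i^*(\vl)$, which is the first line of the claim.

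The core of the argument is the evaluation of $\dat_i^*$. Because $\dat_i$ is finite only on $\{E_i\alpha : \alpha \in \Delta_n^U\}$, the supremum defining $\dat_i^*(\vl)$ reduces to $\sup_{\alpha \in \Delta_n^U} \la E_i^\top\vl, \alpha\ra - \rho(T_i\alpha)$. The key observation is that the barycentric coordinates $\alpha$ and the sublabel $u = T_i\alpha$ are related by an \emph{affine bijection}. Stacking the relations $T_i\alpha = u$ and $\mathbf{1}^\top\alpha = 1$ gives $\begin{pmatrix} T_i \\ \mathbf{1}^\top\end{pmatrix}\alpha = \begin{pmatrix} u \\ 1\end{pmatrix}$; since $\Delta_i$ is a non-degenerate $n$-simplex this matrix is invertible and, unwinding the transpose conventions, its inverse is exactly $M_i = (T_i^\top,\mathbf{1})^{-\top}$. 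Reading off columns yields $\alpha = A_i u + b_i$ with $A_i = (M_i^1,\dots,M_i^n)$ and $b_i = M_i^{n+1}$, and $\alpha \in \Delta_n^U$ corresponds to $u \in \Delta_i$. Substituting, splitting the inner product via $\la E_i^\top\vl, b_i\ra = \la E_i b_i, \vl\ra$, and recognizing the constrained supremum over $\Delta_i$ as the conjugate of $\rho_i = \rho + \delta_{\Delta_i}$ gives
\[
  \dat_i^*(\vl) = \la E_i b_i, \vl\ra + \sup_{u}\big[\la A_i^\top E_i^\top\vl, u\ra - \rho_i(u)\big]
  = \la E_i b_i, \vl\ra + \rho_i^*(A_i^\top E_i^\top\vl),
\]
which is the claimed expression.

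I expect the main obstacle to be the bookkeeping in the change of variables: correctly identifying the inverse of the stacked matrix $\begin{pmatrix} T_i \\ \mathbf{1}^\top\end{pmatrix}$ with $M_i$, and verifying that $u \mapsto A_i u + b_i$ maps $\Delta_i$ \emph{bijectively} onto $\Delta_n^U$, so that the substitution in the supremum is an exact identity rather than merely an inequality. The only genuine hypothesis required is the non-degeneracy of each $\Delta_i$, which guarantees invertibility; everything else is the finite min--max interchange and an affine substitution.
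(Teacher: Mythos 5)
Your proposal is correct and follows essentially the same route as the paper's proof: the conjugate-of-a-finite-minimum identity $\dat^* = \max_i \dat_i^*$, followed by the affine change of variables $\alpha = A_i u + b_i$ (via invertibility of the stacked matrix $\bigl(\begin{smallmatrix} T_i \\ \mathbf{1}^\top \end{smallmatrix}\bigr)$, whose inverse is $M_i$) to identify the constrained supremum as $\rho_i^* (A_i^\top E_i^\top \vl)$ plus the linear offset $\la E_i b_i, \vl\ra$. The only differences are expository: you make explicit the finite sup--max interchange and the bijectivity of $u \mapsto A_i u + b_i$ between $\Delta_i$ and the unit simplex, which the paper uses implicitly.
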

\begin{proof}
  Follows from a calculation starting at the
  definition of $\dat^{**}$. See Appendix~\ref{appendix:a} for a 
  detailed derivation.
\end{proof}
The geometric intuition of this construction is depicted in Fig.~\ref{fig:biconj_3d}. Note that if one prescribes the value of $\dat_i$ in \eqref{eq:dataterm_lifted} only on the \emph{vertices} of the unit simplices $\Delta_n^U$, i.e., $\dat(\ul)=\rho(t^k)$ if $\ul=e^k$ and $+\infty$ otherwise, one obtains the linear biconjugate $\dat^{\ast\ast}(\ul) = \langle \ul, \boldsymbol{s} \rangle,\; \boldsymbol{s} = (\rho(t^i),\ldots,\rho(t^L))$ on the feasible set. This coincides with the standard relaxation of the dataterm used in \cite{PCBC-SIIMS,Lellmann2011c,Chambolle-et-al-siims12,lellmann-et-al-iccv2013}. In that sense, our approach can be seen as a relaxing the dataterm in a more precise way, by incorporating the true value of $\rho$ not only on the finite set of labels $\mathcal{V}$, but also everywhere in between, i.e., on every \emph{sublabel}.

\subsection{Lifting the Vectorial Total Variation} \label{sec:regularizer}
We define the lifted vectorial total variation as
\begin{equation}
  \boldsymbol{TV}(\ul) = \int_{\Omega} \boldsymbol\Psi(D \ul),
  \label{eq:lifted_TV}
\end{equation}
where $D\ul$ denotes the distributional derivative of $\ul$ and $\boldsymbol \Psi$ is positively one-homogeneous, i.e., $\boldsymbol{\Psi}(c \ul) = c\, \boldsymbol{\Psi}(\ul), c \geqs0$. For such functions, the meaning of \eqref{eq:lifted_TV} can be made fully precise using the polar decomposition of the Radon measure $D \ul$ \cite[Cor.~1.29, Thm.~2.38]{AFP}. However, in the following we restrict ourselves to an intuitive motivation for the derivation of $\boldsymbol{\Psi}$ for smooth functions.

Our goal is to find $\boldsymbol{\Psi}$ so that $\boldsymbol{TV}(\ul) = \TV(u)$ whenever $\ul:\Om\to \bbR^{\left\vert{\mathcal{V}}\right\vert}$ corresponds to some $u:\Om\to\Gamma$, in the sense that $\ul(x) = E_i \alpha$ whenever $u(x) = T_i \alpha$. In order for the equality to hold, it must in particular hold for all $u$ that are classically differentiable, i.e., $D u = \nabla u$, and whose Jacobian $\nabla u(x)$ is of rank 1, i.e., $\nabla u(x) = (T_i \alpha - T_j \beta) \otimes \nu(x)$ for some $\nu(x) \in \Rd$. This rank 1 constraint enforces the different components of $u$ to have the same jump normal, which is desirable in many applications. In that case, we observe
\begin{equation}
TV(u) = \int_\Omega \| T_{i} \alpha - T_{j} \beta \| \cdot \|\nu(x)\| \, \mathrm{d} x.
\end{equation}
For the corresponding lifted representation $\ul$, we have $\nabla \ul(x) = (E_i \alpha - E_j \beta) \otimes \nu(x)$. Therefore it is natural to require $\boldsymbol{\Psi}(\nabla \ul(x)) = \boldsymbol{\Psi}\left( (E_i \alpha - E_j \beta) \otimes \nu(x) \right) := \|T_i \alpha -T_j \beta\| \cdot \|\nu(x)\|$ in order to achieve the goal $\boldsymbol{TV}(\ul) = \TV(u)$.
Motivated by these observations, we define
\begin{equation}
\boldsymbol\Psi(\boldsymbol p):=\begin{cases} 
\| T_i \alpha - T_j \beta \| \cdot \|\nu\| & \text{if } ~ \boldsymbol p=(E_i \alpha - E_j \beta)\otimes \nu, \\
\infty & \text{otherwise},
\end{cases}
\end{equation}
where $\alpha, \beta \in \Delta_{n+1}^U$, $\nu \in \R^d$ and $1 \leq i,j \leq \left\vert{\mathcal{T}}\right\vert$.
Since the convex envelope of \eqref{eq:lifted_TV} is intractable, we derive a ``locally'' tight convex underapproximation:
\begin{equation}
  \boldsymbol{R}(\ul) = \sup_{\ql : \Omega \to \bbR^{d \times \left\vert{\mathcal{V}}\right\vert}} \int_{\Omega} \iprod{\ul}{\Div \ql} - \boldsymbol \Psi^*(\ql) \ \mathrm{d}x.
\end{equation}
\begin{prop}
The convex conjugate of $\boldsymbol\Psi$ is
\begin{equation}
\boldsymbol\Psi^*(\boldsymbol q)=\delta_\mathcal{K}(\boldsymbol q)
\end{equation}
with convex set
\begin{equation}
\label{eq:setK}
\begin{split}
\mathcal{K}&=\bigcap_{1\leq i,j \leq \left\vert{\mathcal{T}}\right\vert} \left\{ \boldsymbol q \in \R^{d \times \left\vert{\mathcal{V}}\right\vert} \bigm| \|Q_i \alpha - Q_j \beta\| \leq \| T_i \alpha- T_j \beta\|, \ \alpha, \beta \in \Delta_{n+1}^U \right\},
\end{split}
\end{equation}
and $Q_i = (\ql^{i_1},\ \ql^{i_2}, \ \dots, \ \ql^{i_{n+1}}) \in \bbR^{d \times n+1}$. $\boldsymbol q^j \in \mathbb{R}^d$ are the columns of $\boldsymbol q$.
\label{prop:conv_env_reg}
\end{prop}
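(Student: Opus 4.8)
The plan is to compute $\boldsymbol\Psi^*$ directly from the definition $\boldsymbol\Psi^*(\boldsymbol q)=\sup_{\boldsymbol p}\,\iprod{\boldsymbol q}{\boldsymbol p}-\boldsymbol\Psi(\boldsymbol p)$, where the supremum is over $\boldsymbol p\in\R^{d\times\left\vert{\mathcal{V}}\right\vert}$, and to show the result equals $\delta_{\mathcal{K}}$. Since $\boldsymbol\Psi$ takes finite values only on rank-one matrices of the form $(E_i\alpha-E_j\beta)\otimes\nu$, the supremum collapses onto that parametrization:
\[
\boldsymbol\Psi^*(\boldsymbol q)=\sup_{i,j}\ \sup_{\alpha,\beta\in\Delta_{n+1}^U}\ \sup_{\nu\in\Rd}\ \iprod{\boldsymbol q}{(E_i\alpha-E_j\beta)\otimes\nu}-\norm{T_i\alpha-T_j\beta}\cdot\norm{\nu}.
\]

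First I would simplify the bilinear term. The key structural observation is that $w:=E_i\alpha-E_j\beta$ is sparse: its nonzero entries sit only at the vertices of the simplices $\Delta_i$ and $\Delta_j$. Writing the Frobenius product column-wise gives $\iprod{\boldsymbol q}{w\otimes\nu}=\sum_k w_k\,\iprod{\boldsymbol q^k}{\nu}$, and collecting the contributions of the two simplices yields exactly $\iprod{Q_i\alpha-Q_j\beta}{\nu}$, since $\sum_k (E_i\alpha)_k\,\boldsymbol q^k=\sum_{j'=1}^{n+1}\alpha_{j'}\boldsymbol q^{i_{j'}}=Q_i\alpha$ by the definitions of $E_i$ and $Q_i$. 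This reassembly of the sparse lifted vector into the sub-matrices $Q_i,Q_j$ is the one place where the encoding must be used carefully, and it is the crux of the argument; everything else is standard.

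Next, for fixed $i,j,\alpha,\beta$ I would perform the inner maximization over $\nu$. Setting $a=Q_i\alpha-Q_j\beta\in\Rd$ and $c=\norm{T_i\alpha-T_j\beta}\geqs 0$, the quantity $\sup_{\nu}\ \iprod{a}{\nu}-c\norm{\nu}$ is a support-function computation for a scaled ball: by Cauchy--Schwarz it equals $0$ when $\norm{a}\leqs c$ (attained at $\nu=0$), and it equals $+\infty$ when $\norm{a}>c$ (push $\nu$ along $a/\norm{a}$ to infinity). Hence each quadruple $(i,j,\alpha,\beta)$ contributes precisely the indicator of the single constraint $\norm{Q_i\alpha-Q_j\beta}\leqs\norm{T_i\alpha-T_j\beta}$; the degenerate case $c=0$ is handled automatically, forcing $a=0$.

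Finally I would take the outer supremum over $i,j,\alpha,\beta$. All terms are nonnegative, so the supremum is $0$ exactly when every constraint holds simultaneously, i.e.\ when $\boldsymbol q\in\mathcal{K}$, and $+\infty$ as soon as one constraint is violated. This gives $\boldsymbol\Psi^*(\boldsymbol q)=\delta_{\mathcal{K}}(\boldsymbol q)$, with the intersection over $(i,j)$ in the definition of $\mathcal{K}$ being exactly the conjunction produced by the outer supremum. That $\delta_{\mathcal{K}}$ is a legitimate (closed convex) conjugate is consistent since $\mathcal{K}$ is an intersection of sublevel sets of convex norm constraints, hence closed and convex. The main obstacle is purely the bookkeeping of the first step; the remaining maximizations are routine.
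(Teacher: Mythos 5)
Your proposal is correct and follows essentially the same route as the paper's proof: the paper organizes the computation by writing $\boldsymbol\Psi=\min_{i,j}\boldsymbol\Psi_{i,j}$ and using that conjugation turns the pointwise minimum into a maximum of conjugates, which is exactly your single large supremum over $(i,j,\alpha,\beta,\nu)$ split into stages, and the inner step (your Cauchy--Schwarz argument) is the paper's evaluation of the conjugate of the scaled norm $\|T_i\alpha-T_j\beta\|\cdot\|\cdot\|$ at $Q_i\alpha-Q_j\beta$. Your explicit verification of the identity $\iprod{\boldsymbol q}{(E_i\alpha-E_j\beta)\otimes\nu}=\iprod{Q_i\alpha-Q_j\beta}{\nu}$ is a detail the paper uses silently, so nothing is missing.
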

\begin{proof}
  Follows from a calculation starting at the definition of the convex conjugate $\boldsymbol{\Psi}^*$. See Appendix~\ref{appendix:a}.
\end{proof}
Interestingly, although in its original formulation~\eqref{eq:setK} the set $\mathcal{K}$ has infinitely many constraints, one can equivalently represent $\mathcal{K}$ by finitely many. 

\begin{prop}
The set $\mathcal{K}$ in equation \eqref{eq:setK} is the same as 
\begin{equation}
\mathcal{K}=\left\{\ql \in \R^{d \times \left\vert{\mathcal{V}}\right\vert} \mid \left\|D_{\ql}^i \right\|_{S^\infty}\leq 1, \ 1\leq i\leq \left\vert{\mathcal{T}}\right\vert \right\}, ~ D_{\ql}^i = Q_iD\,(T_iD)^{-1}, 
\label{eq:DqMatrix}
\end{equation}
where the matrices $Q_iD \in \mathbb{R}^{d \times n}$ and $T_iD \in \mathbb{R}^{n \times n}$ are given as
$$
Q_iD := \left( \ql^{i_1} - \ql^{i_{n+1}}, ~ \dots, ~\ql^{i_n} - \ql^{i_{n+1}}\right), ~ T_iD:=\left(t^{i_1} - t^{i_{n+1}}, ~ \dots, ~t^{i_n} - t^{i_{n+1}}\right).
$$
\end{prop}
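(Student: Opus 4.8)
The plan is to prove the two inclusions separately, using the observation that the constraints defining \eqref{eq:setK} amount to a global $1$-Lipschitz condition on the piecewise-affine ``dual field'', while \eqref{eq:DqMatrix} imposes the same condition on each simplex individually. The equivalence then reduces to the fact that a continuous piecewise-affine map on a convex domain is globally $1$-Lipschitz iff it is $1$-Lipschitz on every piece.

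First I would establish that \eqref{eq:setK} is contained in \eqref{eq:DqMatrix}, which is the routine direction. Restricting the constraints in \eqref{eq:setK} to the diagonal $i=j$ gives $\norm{Q_i\alpha - Q_i\beta} \leq \norm{T_i\alpha - T_i\beta}$ for all $\alpha,\beta \in \Delta_{n+1}^U$. Writing $\gamma := \alpha - \beta$, which satisfies $\sum_k \gamma_k = 0$, and eliminating the last coordinate through $\gamma_{n+1} = -\sum_{j=1}^n \gamma_j$, one checks the identities $T_i\gamma = T_iD\,\tilde\gamma$ and $Q_i\gamma = Q_iD\,\tilde\gamma$ with $\tilde\gamma = (\gamma_1,\dots,\gamma_n)^\top$. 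Since $\Delta_i$ is a non-degenerate $n$-simplex the matrix $T_iD$ is invertible, and because the inequality is positively one-homogeneous in $\gamma$ it extends from the small differences $\alpha-\beta$ (which fill a relative neighborhood of $0$ in the hyperplane $\{\gamma : \sum_k\gamma_k = 0\}$) to all $\tilde\gamma \in \bbR^n$. Substituting $w = T_iD\,\tilde\gamma$ turns it into $\norm{D_{\ql}^i w} \leq \norm{w}$ for all $w$, i.e.\ $\norm{D_{\ql}^i}_{S^\infty}\leq 1$; hence every $\ql$ satisfying \eqref{eq:setK} satisfies \eqref{eq:DqMatrix}.

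Next I would prove the converse inclusion, which is the geometric core and the step depicted in Fig.~\ref{fig:proof_illustration}. Let $\ql$ satisfy $\norm{D_{\ql}^i}_{S^\infty}\leq 1$ for every triangle, and let $\Phi : \Gamma \to \bbR^d$ be the map that is affine on each $\Delta_i$ with $\Phi(T_i\alpha) = Q_i\alpha$; its linear part on $\Delta_i$ is exactly $D_{\ql}^i$, so $\Phi$ is $1$-Lipschitz there. A point to nail down is that $\Phi$ is consistently (continuously) defined across shared faces: on a common face only the shared vertices carry nonzero barycentric weight, and these weights agree for both adjacent simplices, so the two affine expressions coincide. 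Given arbitrary $u = T_i\alpha$ and $u' = T_j\beta$, connect them by the straight segment $\gamma(t) = (1-t)u + t u'$, which stays in the convex set $\Gamma$; since $\mathcal{T}$ is finite and each simplex is convex, this segment splits at finitely many breakpoints $u = p_0, p_1, \dots, p_m = u'$ into sub-segments, each contained in a single $\Delta_{k_l}$.

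Finally I would assemble the bound by telescoping. On the $l$-th sub-segment $\Phi$ is affine with linear part $D_{\ql}^{k_l}$, so $\norm{\Phi(p_l) - \Phi(p_{l-1})} = \norm{D_{\ql}^{k_l}(p_l - p_{l-1})} \leq \norm{p_l - p_{l-1}}$, with $\Phi(p_l)$ unambiguous at the breakpoints by continuity. Summing and using the triangle inequality yields $\norm{Q_i\alpha - Q_j\beta} = \norm{\Phi(u') - \Phi(u)} \leq \sum_{l=1}^m \norm{p_l - p_{l-1}}$, and because the $p_l$ are collinear and ordered along the segment the right-hand side collapses to $\norm{u' - u} = \norm{T_i\alpha - T_j\beta}$. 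This is precisely the constraint in \eqref{eq:setK} for the pair $(i,j)$, giving $\ql \in$ \eqref{eq:setK} and hence equality of the two sets. I expect the main obstacle to be exactly this second direction: justifying the finite decomposition of the segment, the continuity of $\Phi$ across faces, and the collinearity argument that makes the summed local Lipschitz bounds telescope to a single global one with no loss; the one-homogeneity and collinearity observations are what turn infinitely many constraints into the $\left\vert \mathcal{T} \right\vert$ per-simplex ones.
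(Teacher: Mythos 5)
Your proposal is correct and follows essentially the same route as the paper's proof: the first inclusion by restricting \eqref{eq:setK} to $i=j$, using homogeneity of differences of barycentric coordinates and invertibility of $T_iD$, and the converse by decomposing the straight segment between $T_i\alpha$ and $T_j\beta$ into finitely many pieces lying in single simplices, applying the per-simplex bound \eqref{eq:helper} on each piece, and telescoping via continuity across shared faces. Your phrasing in terms of a globally $1$-Lipschitz piecewise-affine map $\Phi$ is merely a cleaner packaging of the paper's explicit computation, including the face-consistency point the paper states as $Q_{k_l}\alpha_l(a_{l+1}) = Q_{k_{l+1}}\alpha_{l+1}(a_{l+1})$.
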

\begin{proof}
Similar to the analysis in \cite{lellmann-et-al-iccv2013}, equation \eqref{eq:setK} basically states the Lipschitz continuity of a piecewise linear function defined by the matrices $\ql \in \R^{d \times \left\vert{\mathcal{V}}\right\vert}$. Therefore, one can expect that the Lipschitz constraint is equivalent to a bound on the derivative. For the complete proof, see Appendix~\ref{appendix:a}.
\end{proof}

\subsection{Lifting the Overall Optimization Problem}
Combining dataterm and regularizer, the overall optimization problem is given 
\begin{equation}
  \min_{\ul : \Omega \to \bbR^{\left\vert{\mathcal{V}}\right\vert}} \sup_{\ql : \Omega \to \mathcal{K}} \int_{\Omega} \dat^{**}(\ul) + \iprod{\ul}{\Div \ql} \ \mathrm{d}x.
  \label{eq:opti_prob}
\end{equation}
A highly desirable property is that, opposed to any other vectorial lifting approach from the literature, our method with just one simplex 
applied to a convex problem yields the same solution as the unlifted problem.
\begin{prop} \label{prop:equivalency_unlifted}
  If the triangulation contains only 1 simplex, $\mathcal{T} = \{ \Delta \}$, i.e., $\left\vert{\mathcal{V}}\right\vert = n + 1$, then the proposed optimization problem \eqref{eq:opti_prob} is equivalent to 
  \begin{equation}
    \underset{u : \Omega \to \Delta} \min ~ \int_{\Omega} (\rho + \delta_\Delta)^{**}(x, u(x)) \ \mathrm{d}x + \lambda TV(u),
    \label{eq:unlifted_problem_convex}
  \end{equation}
  which is \eqref{eq:unlifted_problem} with a globally convexified dataterm on $\Delta$.
\end{prop}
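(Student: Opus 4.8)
The plan is to exploit the one-to-one correspondence $u = T\alpha \leftrightarrow \ul = E\alpha$ from Section~\ref{sec:lifting} in the degenerate case $\mathcal{T} = \{\Delta\}$, where the encoding matrix $E = (e^1,\dots,e^{n+1})$ is simply the identity $I_{n+1}$. Hence $\ul = \alpha$ ranges over the unit simplex $\Delta_n^U$, which lies in the hyperplane $H = \{\alpha : \mathbf{1}^\top \alpha = 1\}$, and $u = T\ul$ with $T|_H : H \to \bbR^n$ an invertible affine map (the columns of $T$ are the affinely independent vertices of the nondegenerate simplex $\Delta$). First I would record that this substitution is a measurability- and BV-preserving bijection between admissible $\ul : \Omega \to \Delta_n^U$ and admissible $u : \Omega \to \Delta$, so that it suffices to show that under it the dataterm and the regularizer of \eqref{eq:opti_prob} transform exactly into those of \eqref{eq:unlifted_problem_convex}; the energies then coincide pointwise and the minima agree.

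For the dataterm I would argue that with a single simplex the maximum over $i$ in Proposition~\ref{prop:conv_env_data} is vacuous, so $\dat^{**} = \dat_1^{**}$. Since $E = I$, the function $\dat_1$ from \eqref{eq:dataterm_lifted} is precisely $(\rho + \delta_\Delta)\circ T$ restricted to $H$ and set to $+\infty$ off $H$: indeed $T|_H$ maps $\Delta_n^U$ onto $\Delta$, so $\ul \in \Delta_n^U$ iff $\ul \in H$ and $T\ul \in \Delta$. The key step is then that biconjugation commutes with the invertible affine reparametrization $T|_H$. Because $\dom \dat_1 \subseteq H$ and convex hulls of subsets of $H$ stay in $H$, the biconjugate $\dat_1^{**}$ is $+\infty$ off $H$, and on $H$ it equals the intrinsic convex envelope of $(\rho + \delta_\Delta)\circ T|_H$, which is $(\rho + \delta_\Delta)^{**}\circ T|_H$. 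This yields $\dat^{**}(\ul) = (\rho + \delta_\Delta)^{**}(T\ul)$, matching the convexified dataterm of \eqref{eq:unlifted_problem_convex}.

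For the regularizer I would use the finite representation \eqref{eq:DqMatrix} of $\mathcal{K}$. Writing $R := Q_1D\,(T_1D)^{-1}$, the admissible $\ql$ are parametrized by $R$ with $\|R\|_{S^\infty} \leq 1$ together with a free column $\ql^{n+1}$. Evaluating the support function $\sigma_\mathcal{K}(\nabla\ul) = \sup_{\ql \in \mathcal{K}} \langle \ql, \nabla\ul\rangle$, the free column forces $\sum_k \nabla\ul_k = 0$, which is automatic since $\ul$ takes values in $H$, and what remains is $\sup_{\|R\|_{S^\infty}\leq 1} \langle R, \sum_j \nabla\ul_j\,(t^j - t^{n+1})^\top\rangle = \|\sum_j \nabla\ul_j\,(t^j - t^{n+1})^\top\|_{S^1}$, using that $S^1$ is dual to $S^\infty$. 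A short rearrangement using $u = T\ul$ shows $\sum_j \nabla\ul_j\,(t^j - t^{n+1})^\top = (\nabla u)^\top$, so by transpose-invariance of the Schatten norm $\sigma_\mathcal{K}(\nabla\ul) = \|\nabla u\|_{S^1}$. Since $\mathcal{K}$ is symmetric, integration by parts in $\boldsymbol{R}$ gives $\boldsymbol{R}(\ul) = \int_\Omega \sigma_\mathcal{K}(\nabla\ul) = TV(u)$ (up to the weight $\lambda$), i.e.\ the lifted regularizer is exactly the vectorial total variation of $u$.

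Combining the two identities, the objective of \eqref{eq:opti_prob} at $\ul$ equals that of \eqref{eq:unlifted_problem_convex} at $u = T\ul$ for every admissible point, and the feasibility bijection then yields equality of optimal values and a correspondence of minimizers. I expect the dataterm step to be the main obstacle: one must carefully justify that ambient biconjugation over $\bbR^{n+1}$ restricted to $H$ agrees with the intrinsic convex envelope on $H$, and that the degenerate direction $\mathbf{1}$ (the kernel of $T$ on $\bbR^{n+1}$) is precisely what encodes the constraint $\mathbf{1}^\top \ul = 1$ rather than introducing an unwanted relaxation. The regularizer identity, by contrast, reduces to a direct support-function calculation once the finite representation \eqref{eq:DqMatrix} is invoked.
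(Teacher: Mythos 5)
Your proposal is correct, and it establishes exactly the two pointwise identities on which the paper's own proof rests --- $\dat^{**}(\ul) = (\rho+\delta_\Delta)^{**}(T\ul)$ for the dataterm and $\boldsymbol{R}(\ul) = TV(T\ul)$ for the regularizer --- but it gets there by a genuinely different route. The paper is purely computational: it eliminates the redundant coordinate by parametrizing $\ul$ through $\tilde u : \Omega \to \Delta_e \subset \bbR^n$, plugs this into the sup-formula of Prop.~\ref{prop:conv_env_data}, and pushes through a chain of explicit substitutions ($\tilde v = D^\top \vl$, then $w = t^{n+1} + TD\alpha$, then $v' = (TD)^{-\top}\tilde v$) until $(\rho+\delta_\Delta)^{**}(t^{n+1}+TD\tilde u)$ emerges; the regularizer is treated by analogous changes of variables on the dual field ($\tilde q = D^\top \ql^\top$, $q' = \tilde q\,(TD)^{-1}$) that transform $\mathcal{K}$ into the unit $S^\infty$ ball, so that the standard dual formulation of $TV$ is recognized. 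You instead keep the redundant coordinate, work on the hyperplane $H$, and replace the conjugate calculus by two structural facts: (i) the ambient biconjugate of a function whose domain lies in a closed affine subspace is $+\infty$ off that subspace and coincides on it with the intrinsic lsc convex envelope, which in turn commutes with the invertible affine map $T|_H$; and (ii) a direct pointwise support-function computation on the finite representation \eqref{eq:DqMatrix} of $\mathcal{K}$, using $S^1$/$S^\infty$ duality and transpose-invariance of singular values. Both steps are sound; for (i) you should record the standing assumption that $\rho$ is real-valued (hence bounded below) on the compact $\Delta$, so that $\dat_1^{**}$ is proper and agrees with the lsc convex hull, and note that a convex function on the closed set $H$ extended by $+\infty$ remains convex and lsc in the ambient space. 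What your route buys is conceptual transparency --- it explains \emph{why} the single-simplex case collapses to the unlifted convexified problem --- whereas the paper's substitution chain is elementary and self-contained at the cost of longer computation; your regularizer argument also reuses Prop.~3 rather than re-deriving the constraint transformation from scratch. One small slip in your closing remarks: the direction transverse to $H$ is the normal $\mathbf{1}$, but $\mathbf{1}$ is generally \emph{not} the kernel of $T$ (that would require $\sum_k t^k = 0$); this mislabeling plays no role in your actual argument, which correctly uses only that $\dom\, \dat_1 \subseteq H$ and that $H$ is closed.
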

\begin{proof}
For $u = t^{n+1} + TD \tilde u$ the substitution
$\ul = \left ( \tilde u_1, \hdots, \tilde u_n, 1 - \sum_{j=1}^n \tilde u_j
\right)$ into $\dat^{**}$ and $\boldsymbol R$ yields the result. For
a complete proof, see Appendix~\ref{appendix:a}.
\end{proof}

\section{Numerical Optimization}
\subsection{Discretization}
For now assume that $\Omega\subset \bbR^d$ is a $d$-dimensional Cartesian grid and let $\Div$ denote a finite-difference divergence operator with $\Div \boldsymbol q : \Omega \to \mathbb{R}^{\left\vert{\mathcal{V}}\right\vert}$. Then the relaxed energy minimization problem becomes
\begin{equation} \label{eq:discrete_saddle_point}
\min_{\ul : \Omega \to \mathbb{R}^{\left\vert{\mathcal{V}}\right\vert}}\max_{\boldsymbol q : \Omega \to \mathcal{K}} ~ \sum_{x \in \Omega} \dat^{**}(x, \ul(x)) + \langle \Div \boldsymbol q, \ul\rangle.
\end{equation}
In order to get rid of the pointwise maximum over $\dat_i^*(\boldsymbol v)$ in Eq.~\eqref{eq:dataterm_sublabel_biconj}, we introduce additional variables $w(x) \in \mathbb{R}$ and additional constraints $(\vl(x), w(x)) \in \mathcal{C}$, $x \in \Omega$ so that $w(x)$ attains the value of the pointwise maximum:
\begin{equation}\label{eq:discrete_saddle_point_final}
\min_{\ul : \Omega \to \mathbb{R}^{\left\vert{\mathcal{V}}\right\vert}} \max_{\substack{(\vl, w):\Omega \to \mathcal{C}\\[0.5mm] \boldsymbol q : \Omega \to \mathcal{K}}} ~ \sum_{x \in \Omega} \langle \ul(x), \vl(x) \rangle - w(x) + \langle \Div\boldsymbol q, \ul\rangle,
\end{equation}
where the set $\mathcal{C}$ is given as
\begin{equation}
 \mathcal{C} = \bigcap_{1\leq i \leq \left\vert{\mathcal{T}}\right\vert} \mathcal{C}_i, \quad \mathcal{C}_i:=\left \{ (x, y) \in \bbR^{\left\vert{\mathcal{V}}\right\vert+1} \mid \dat_i^*(x) \leq y \right \}.
    \label{eq:constraint_C}
\end{equation}
For numerical optimization we use a GPU-based
implementation\footnote{\url{https://github.com/tum-vision/sublabel_relax}}
of a first-order primal-dual method \cite{PCBC-ICCV09}. The algorithm
requires the orthogonal projections of the dual variables onto the
sets $\mathcal{C}$ respectively $\mathcal{K}$ in every
iteration. However, the projection onto an epigraph of dimension
$\left\vert{\mathcal{V}}\right\vert+1$ is difficult for large values
of $\left\vert{\mathcal{V}}\right\vert$. We rewrite the
constraints $(\vl(x), w(x)) \in \mathcal{C}_i$, $1\leq i \leq
\left\vert{\mathcal{T}}\right\vert$, $x\in\Omega$ as
$(n+1)$-dimensional epigraph constraints introducing variables
$r^i(x)\in \bbR^n$, $s_i(x)\in \bbR$:
\begin{equation} \label{eq:epi-small}
  \rho_i^*\left(r^i(x)\right) \leq s_i(x), \quad r^i(x)=A_i^\top E_i^\top \, \boldsymbol v(x), \quad s_i(x) = w(x) - \la E_i b_i, \boldsymbol v(x) \ra.
\end{equation}
These equality constraints can be implemented using Lagrange
multipliers. For the
projection onto the set $\mathcal{K}$ we use an approach similar to
\cite[Figure 7]{goldluecke-siims-2012}.

\subsection{Epigraphical Projections}\label{sec:epi-proj}
 Computing the Euclidean projection onto the epigraph of $\rho_i^*$ is a central part of the numerical implementation of the presented method. However, for $n > 1$ this is nontrivial. Therefore we provide a detailed explanation of the projection methods used for different classes of $\rho_i$. We will consider quadratic, truncated quadratic and piecewise linear $\rho$.
\paragraph{Quadratic case:}
Let $\rho$ be of the form $\rho(u)=\frac{a}{2} \, u^\top u + b^\top u
+ c$. A direct projection onto the epigraph of $\rho_i^*=(\rho +
\delta_{\Delta_i})^*$ for $n > 1$ is difficult. However, the epigraph
can be decomposed into separate epigraphs for which it is easier to
project onto: For proper, convex, lsc. functions $f,g$ the epigraph of
$(f+g)^*$ is the Minkowski sum of the epigraphs of $f^*$ and $g^*$ (cf.~\cite[Exercise~1.28,~Theorem~11.23a]{Rockafellar-Variational-Analysis}). This means that it suffices to compute the projections onto the epigraphs of a quadratic function $f^*=\rho^*$ and a convex, piecewise linear function $g^*(v)=\max_{1 \leq j \leq n+1} \iprod{t^{i_j}}{v}$ by rewriting constraint~\eqref{eq:epi-small} as 
\begin{equation}
\rho^*(r_{f})\leq s_{f},~ {\delta_{\Delta_i}}^*(c_{g}) \leq d_{g} ~ \text{ s.t. } (r,s)=(r_{f},s_{f})+(c_{g},d_{g}).
\end{equation}
For the projection onto the epigraph of a $n$-dimensional quadratic
function we use the method described in~\cite[Appendix
B.2]{strekalovskiy-et-al-siims14}. The projection onto a piecewise
linear function is described in the last paragraph of this section.
\paragraph{Truncated quadratic case:} 
Let $\rho$ be of the form $\rho(u)= \min \,\{\,\nu, ~ \frac{a}{2}\,
u^\top u + b^\top u + c\,\}$ as it is the case for the nonconvex
robust ROF with a truncated quadratic dataterm in
Sec.~\ref{sec:trunc-quad-rof}. Again, a direct projection onto the
epigraph of $\rho_i^*$ is difficult. However, a decomposition of the
epigraph into simpler epigraphs is possible as the epigraph of
$\min\{f, g\}^*$ is the intersection of the epigraphs of $f^*$ and
$g^*$. Hence, one can separately project onto the epigraphs
of $(\nu + \delta_{\Delta_i})^*$ and $(\frac{a}{2}\,
u^\top u + b^\top u + c + \delta_{\Delta_i})^*$. Both of these
projections can be handled using the methods from the other paragraphs.
\paragraph{Piecewise linear case:}
In case $\rho$ is piecewise linear on each $\Delta_i$, i.e., $\rho$
attains finite values at a discrete set of sampled sublabels
$\mathcal{V}_i \subset \Delta_i$ 
and interpolates linearly between them, we have that 
\begin{equation}
  (\rho+ \delta_{\Delta_i})^*(v)=\max_{\tau \in \mathcal{V}_i} ~ \iprod{\tau}{v} - \rho(\tau).
\end{equation}
Again this is a convex, piecewise linear function. For the projection
onto the epigraph of such a function, a quadratic program of the form
\begin{equation}
\min_{(x,y) \in \bbR^{n+1}} ~ \frac{1}{2}\|x-c\|^2 + \frac{1}{2}\|y-d\|^2 ~\text{ s.t. } \iprod{\tau}{x} - \rho(\tau) \leq y, \forall \tau \in \mathcal{V}_i
\end{equation}
needs to be solved.
We implemented the primal active-set method described in~\cite[Algorithm 16.3]{numerical-optimization-wright}, and found it solves the program in a few (usually $2-10$) iterations for a moderate number of constraints.

\section{Experiments}
\subsection{Vectorial ROF Denoising}
In order to validate experimentally, that our model is exact for convex dataterms, we evaluate it on the Rudin-Osher-Fatemi~\cite{Rudin-Osher-Fatemi-92} (ROF) model with vectorial TV~\eqref{eq:total_variation}. In our model this corresponds to defining $\rho(x, u(x))= \frac{1}{2}\|u(x) - I(x)\|^2$.
As expected based on Prop.~\ref{prop:equivalency_unlifted} the energy of the solution of the unlifted problem is equal to the energy of the projected solution of our method for $|\mathcal{V}|=4$ up to machine precision, as can be seen in Fig.~\ref{fig:spirale} and Fig.~\ref{fig:rof}. We point out, that the sole purpose of this experiment is a proof of concept as our method introduces an overhead and convex problems can be solved via direct optimization. It can be seen in Fig.~\ref{fig:spirale} and Fig.~\ref{fig:rof}, that the baseline method~\cite{lellmann-et-al-iccv2013} has a strong label bias.
\figSpirale
\figROF
\subsection{Denoising with Truncated Quadratic Dataterm} \label{sec:trunc-quad-rof}
For images degraded with both, Gaussian and salt-and-pepper noise we define the dataterm as
$\rho(x, u(x))= \min\,\left\{\frac{1}{2}\|u(x) - I(x)\|^2, \nu \right\}$.
We solve the problem using the epigraph decomposition described in the
second paragraph of Sec.~\ref{sec:epi-proj}. It can be seen, that
increasing the number of labels $\left\vert{\mathcal{V}}\right\vert$
leads to lower energies and at the same time to a reduced effect of
the TV. This occurs as we always compute a piecewise convex
underapproximation of the original nonconvex dataterm, that gets
tighter with a growing number of labels. The baseline method~\cite{lellmann-et-al-iccv2013} again produces strong discretization artefacts even for a large number of labels $\left\vert{\mathcal{V}}\right\vert=4\times4\times4=64$.
\figRobustROF
\figFlowGrove
\figFlowBeanbags 
\subsection{Optical Flow}
We compute the optical flow $v:\Omega \to \bbR^2$ between two input images $I_1, I_2$. The label space $\Gamma = [-d, d]^2$ is chosen
according to the estimated maximum displacement $d \in \bbR$ between
the images. 
The dataterm is 
  $\rho(x, v(x)) = \norm{I_2(x) - I_1(x + v(x))}$,
and $\lambda(x)$ is based on the norm of the image gradient $\nabla I_1(x)$.

In Fig.~\ref{fig:flow_grove} we compare the proposed method to the 
product space approach \cite{goldluecke-et-al-siam-2013}. Note
that we implemented the product space dataterm using Lagrange 
multipliers, also referred to as the \emph{global} approach in
\cite{goldluecke-et-al-siam-2013}. While this increases the memory 
consumption, it comes with lower computation time and guaranteed 
convergence.
For our method, we sample the label space $\Gamma = [-15, 15]^2$ on 
$150 \times 150$ sublabels and subsequently convexify the energy on 
each triangle using the quickhull algorithm~\cite{barber1996quickhull}. 
For the product space approach we sample the label space at
equidistant labels, from $5 \times 5$ to $27 \times 27$. 
As the regularizer from the product space approach is different from
the proposed one, we chose $\mu$ differently for each method. For
the proposed method, we set $\mu = 0.5$ and for the product space
and baseline approach $\mu = 3$.
We can see in Fig.~\ref{fig:flow_grove}, our method outperforms the
product space approach w.r.t. the average end-point error. 
Our method outperforms previous lifting approaches: In Fig.~\ref{fig:flow_large_disp} we compare our method on large displacement
optical flow to the baseline \cite{lellmann-et-al-iccv2013}. To obtain
competitive results on the Middlebury benchmark, 
one would need to engineer a better dataterm.

\section{Conclusions}

We proposed the first sublabel-accurate convex relaxation of vectorial
multilabel problems.  To this end, we approximate the generally
nonconvex dataterm in a piecewise convex manner as opposed to the
piecewise linear approximation done in the traditional functional
lifting approaches.  This assures a more faithful approximation of the
original cost function and provides a meaningful interpretation for
the non-integral solutions of the relaxed convex problem.  In
experimental validations on large-displacement optical flow estimation
and color image denoising, we show that the computed solutions have
superior quality to the traditional convex relaxation methods while
requiring substantially less memory and runtime.
\newpage
\begin{appendix}
\section{Theory} \label{appendix:a}
\begin{proof}[Proof of Proposition 1]
By definition the biconjugate of $\dat$ is given as
\begin{equation}
\begin{split}
\dat^{**}(\ul) &= \sup_{\vl \in \R^{\left\vert{\mathcal{V}}\right\vert}} \la \ul, \vl\ra - \left(\min_{1 \leq i \leq \left\vert{\mathcal{T}}\right\vert} \dat_i(\vl)\right)^* \\
&= \sup_{\vl \in \R^{\left\vert{\mathcal{V}}\right\vert}} \la \ul, \vl\ra - \max_{1 \leq i \leq \left\vert{\mathcal{T}}\right\vert} \dat_i^*(\vl).
\end{split}
\end{equation}
We proceed computing the conjugate of $\dat_i$:
\begin{equation}
\begin{split}
\boldsymbol\rho_i^*(\vl) &= \sup_{\ul \in  \R^{\left\vert{\mathcal{V}}\right\vert}} \la \ul, \vl\ra - \dat_i(\ul)\\
&=\sup_{\boldsymbol \alpha \in \Delta_{n+1}^U} \la E_i \alpha, \vl \ra  - \rho\left(T_i \alpha \right),
\end{split}
\end{equation}
We introduce the substitution $r:=T_i \alpha \in \Delta_i$ and obtain
\begin{equation}
\alpha = K_i^{-1} \begin{pmatrix}r\\ 1 \end{pmatrix}, \quad K_i:=\begin{pmatrix} T_i \\ \mathbf{1}^\top \end{pmatrix} \in \R^{{n+1} \times {n+1}},
\end{equation}
since $K_i$ is invertible for $(\mathcal{V}, \mathcal{T})$ being a non-degenerate triangulation and $\sum_{j=1}^{n+1} \alpha_j = 1$.
With this we can further rewrite the conjugate as
\begin{equation}
\begin{split}
\hdots &=\sup_{\boldsymbol r \in \Delta_i} \la A_i r+b_i, E_i^\top \vl\ra - \rho(r) \\
&= \la E_i b_i, \vl \ra + \sup_{r \in \R^n} \la r, A_i^\top E_i^\top  \vl\ra - \rho(r) - \delta_{\Delta_i}(r) \\
&= \la E_i b_i,\vl \ra + \rho_i^*(A_i^\top E_i^\top  \vl).
\end{split}
\end{equation}
\end{proof}

\begin{proof}[Proof of Proposition 2]
Define $\boldsymbol \Psi_{i,j}$ as
\begin{equation}
\boldsymbol\Psi_{i,j}(\pl):=\begin{cases} 
\| T_i \alpha - T_j \beta \| \cdot \|\nu\| & \text{if } ~ \pl=(E_i \alpha - E_j \beta)\nu^\top, ~ \alpha, \beta \in \Delta_{n+1}^U, ~ \nu \in \R^d, \\
\infty & \text{otherwise}.
\end{cases}
\end{equation}
Then, $\boldsymbol \Psi$ can be rewritten as a pointwise minimum over the individual $\boldsymbol \Psi_{i,j}$
\begin{equation}
\boldsymbol \Psi( \pl ) = \min_{1 \leq i,j \leq \left\vert{\mathcal{T}}\right\vert} \boldsymbol \Psi_{i,j}( \pl ).
\end{equation}
We begin computing the conjugate of $\boldsymbol \Psi_{i,j}$
\begin{equation}
\begin{split}
\boldsymbol\Psi_{i,j}^*(\ql)&= \sup_{\pl \in \R^{d \times \left\vert{\mathcal{V}}\right\vert}} \la \pl, \ql\ra - \boldsymbol \Psi_{i,j}(\pl) \\
&=\sup_{\alpha,\beta \in \Delta_{n+1}^U}\sup_{\nu \in \R^d} \la Q_i \alpha - Q_j \beta, \nu \ra - \| T_i \alpha - T_j \beta \| \cdot \|\nu\| \\
&= \sup_{\alpha,\beta \in \Delta_{n+1}^U} \left(\| T_i \alpha - T_j \beta \| \cdot \|\cdot\|\right)^*(Q_i \alpha - Q_j \beta) \\
&=\delta_{\mathcal{K}_{i,j}}(\ql),
\end{split}
\end{equation}
with the set $K_{i,j}$ being defined as
\begin{equation}
\mathcal{K}_{i,j} :=\left\{ \ql \in \R^{d \times \left\vert{\mathcal{V}}\right\vert} \bigm| \|Q_i \alpha - Q_j \beta\| \leq \| T_i \alpha- T_j \beta\|, \ \alpha, \beta \in \Delta_{n+1}^U \right\}.
\end{equation}
Since the maximum over indicator functions of sets is equal to the indicator function of the intersection of the sets we obtain for $\boldsymbol \Psi^*$
\begin{equation}
\begin{split}
\boldsymbol \Psi^{*}(\ql)&= \max_{1 \leq i,j \leq \left\vert{\mathcal{T}}\right\vert} \boldsymbol \Psi^*_{i,j}(\ql) \\
&= \delta_\mathcal{K}(\ql).
\end{split}
\end{equation}
\end{proof}

\begin{proof}[Proof of Proposition 3]
\figProofIllustration
Let $\ql \in \R^{d \times \left\vert{\mathcal{V}}\right\vert}$ s.t. $\|Q_i \alpha - Q_j \beta\| \leq \| T_i \alpha- T_j \beta\|$ for all $\alpha, \beta \in \Delta_{n+1}^U$ and $1\leq i,j \leq \left\vert{\mathcal{T}}\right\vert$. For any $1\leq i\leq \left\vert{\mathcal{T}}\right\vert$ define 
\begin{equation}
\begin{split}
f_i: \mathbb{R}^n &\rightarrow \mathbb{R}^n, \\
(\alpha_1, ...,\alpha_n) &\mapsto \sum_{l=1}^n \alpha_l t^{i_l} + (1- \sum_{l=1}^n \alpha_l) t^{i_{n+1}} = T_i\alpha,
\end{split}
\end{equation}
and analogously
\begin{equation}
\begin{split}
g_i: \mathbb{R}^n &\rightarrow \mathbb{R}^{ \left\vert{\mathcal{V}}\right\vert} \\
(\alpha_1, ...,\alpha_n) &\mapsto \sum_{l=1}^n \alpha_l \ql^{i_l} + (1- \sum_{l=1}^n \alpha_l) \ql^{i_{n+1}} = Q_i \alpha.
\end{split}
\end{equation}
Let us choose an $\alpha \in \mathbb{R}^n$ such that $\alpha_i>0$, $\sum_l \alpha_l <1$. Then $\|Q_i \alpha - Q_j \beta\| \leq \| T_i \alpha- T_j \beta\|$ for all $\alpha, \beta \in \Delta_{n+1}^U$ and $1\leq i,j \leq \left\vert{\mathcal{T}}\right\vert$ implies that 
\begin{equation}
\|g_i(\alpha) - g_i(\alpha - h)\| \leq \|f_i(\alpha) - f_i(\alpha - h)\|,
\end{equation}
holds for all vectors $h$ with sufficiently small entries. Inserting the definitions of $g_i$ and $f_i$ we find that
\begin{equation}
\| Q_i D h \| \leq \|T_i D h\|
\end{equation}
holds for all $h$ with sufficiently small entries. For a non-degenerate triangle, $T_i D$ is invertible and a simple substitution yields that \begin{equation}
\| Q_iD(T_i D)^{-1} \tilde h \|_2 \leq \| \tilde h\|,
\end{equation}
holds for all $\tilde h$ with sufficiently small entries. This means that the operator norm of $D_{\ql}^i$ induced by the $\ell^2$ norm, i.e. the $S^\infty$ norm, is bounded by one. 

Let us now show the other direction. For $\ql \in \R^{d \times \left\vert{\mathcal{V}}\right\vert}$ s.t. $\left\|D_{\ql}^i \right\|_{S^\infty}\leq 1, \ 1\leq i\leq \left\vert{\mathcal{T}}\right\vert$, note that inverting the above computation immediately yields that 
\begin{equation}
\label{eq:helper}
\|Q_k\alpha - Q_k\beta\| \leq \|T_k \alpha - T_k \beta\|
\end{equation}
holds for all $1\leq k \leq  \left\vert{\mathcal{T}}\right\vert$, $\alpha, \beta \in \Delta_{n+1}^U$. Our goal is to show that having this inequality on each simplex is sufficient to extend it to arbitrary pairs of simplices. The overall idea of this part of the proof is illustrated in Fig.~\ref{fig:proof_illustration}. 

Let $1\leq i,j\leq  \left\vert{\mathcal{T}}\right\vert$ and $\alpha, \beta \in \mathbb{R}^n$ with $\alpha_l, \beta_l \geq 0$, $\sum_l \alpha_l \leq \sum_l \beta_l \leq 1$ be given. Consider the line segment
\begin{equation}
\begin{split}
c(\gamma): [0,1] &\rightarrow \mathbb{R}^d \\
\gamma &\mapsto \gamma \,T_j \beta + (1-\gamma) \,T_i \alpha.
\end{split}
\end{equation}
Since the triangulated domain is convex, there exist $0 = a_0 < a_1 < \hdots < a_r = 1$ and functions $\alpha_l(\gamma)$ such that for $\gamma \in [a_l,a_{l+1}]$, $0\leq l \leq r-1$ one can write $c(\gamma)=\gamma\, T_j \beta + (1-\gamma)\, T_i \alpha = T_{k_l} {\alpha_l(\gamma)}$ for some $1 \leq k_l \leq T$. The continuity of $c(\gamma)$ implies that $T_{k_l} \alpha_l(a_{l+1}) = T_{k_{l+1}} \alpha_{l+1}(a_{l+1})$, i.e. these points correspond to both simplices, $k_l$ and $k_{l+1}$. Note that this also means that $Q_{k_l}\alpha_l(a_{l+1}) = Q_{k_{l+1}}\alpha_{l+1}(a_{l+1})$. The intuition of this construction is that the $c(a_{l+1})$ are located on the boundaries of adjacent simplices on the line segment. We find 
\begin{equation}
\begin{split}
\|T_i \alpha- T_j \beta\| &= \sum_{l=0}^{r-1} (a_{l+1} - a_l) \|T_i \alpha - T_j \beta\| \\
&=\sum_{l=0}^{r-1} \|(a_{l+1} - a_l)(T_i \alpha - T_j \beta)\| \\
&=\sum_{l=0}^{r-1} \|a_{l+1} T_i \alpha - a_l T_i \alpha - a_{l+1} T_j \beta +  a_l T_j \beta\| \\
&= \sum_{l=0}^{r-1}\left \|  a_l T_j \beta + (1-a_l) T_i\alpha - \left( a_{l+1} T_j \beta + (1-a_{l+1}) T_i\alpha\right) \right \|  \\
&= \sum_{l=0}^{r-1} \left \| T_{k_l}\alpha_l(a_l) - T_{k_l}\alpha_l(a_{l+1}) \right \|  \\
&\stackrel{\eqref{eq:helper}}{\geq} \sum_{l=0}^{r-1} \left \| Q_{k_l}\alpha_l(a_l) - Q_{k_l}\alpha_l(a_{l+1}) \right \|  \\
&\geq  \left \|\sum_{l=0}^{r-1} (Q_{k_l}\alpha_l(a_l) - Q_{k_l}\alpha_l(a_{l+1})) \right \|  \\
&=  \left \|\sum_{l=0}^{r-1} (Q_{k_l}\alpha_l(a_l) - Q_{k_{l+1}}\alpha_{l+1}(a_{l+1})) \right \|  \\
&=  \left \|Q_{k_0}\alpha_0(a_0) - Q_{k_r}\alpha_r(a_r) \right \|  \\
&=  \left \|Q_i \alpha - Q_j \beta \right \|,
\end{split}
\end{equation}
which yields the assertion.

\end{proof}

\begin{proof}[Proof of Proposition 4]
Let $\Delta = \conv \{ t^1, \hdots, t^{n+1} \}$ be given by affinely independent vertices $t^i \in \bbR^n$. 
We show that our lifting approach applied to the label space $\Delta$ solves the
convexified unlifted problem, where the dataterm was replaced by its convex hull on $\Delta$.
Let the matrices $T \in \bbR^{n \times (n+1)}$ and $D \in \bbR^{(n+1) \times n}$ be defined through
\begin{equation}
  T = 
  \begin{pmatrix}
    t^1, & \hdots, & t^{n+1}
  \end{pmatrix},~
  D =
  \begin{pmatrix}
    1 & &\\
     & \ddots &\\
     & & 1\\
     -1 & \hdots & -1
  \end{pmatrix}, ~
  TD = 
  \begin{pmatrix}
    t^1 - t^{n+1}, & \hdots, & t^n - t^{n+1}\\
  \end{pmatrix},
\end{equation}
The transformation $x \mapsto t^{n+1} + TD x$ maps 
$\Delta_e = \conv \{ 0, e^1, \hdots, e^n \} \subset \bbR^n$
to $\Delta$. Now consider the 
following lifted function $\ul : \Omega \to \bbR^{n+1}$ parametrized through $\tilde u : \Omega \to \Delta_e$:
\begin{equation}
  \ul(x) = \begin{pmatrix}\tilde u_1(x), & \hdots,  & \tilde u_n(x), & 1 - \sum_{j=1}^n  \tilde u_j(x) \end{pmatrix}.
\end{equation}
Consider a fixed $x \in \Omega$. Plugging this lifted representation into the biconjugate of the lifted dataterm $\dat$ yields:
\begin{equation}
  \begin{aligned}
    \dat^{**}(\ul) &= \underset{v \in \bbR^{n+1}} \sup ~ \iprod{\ul}{\vl} - \underset{\alpha \in \Delta_{n+1}^U} \sup ~ \iprod{\alpha}{\vl} - \rho(T \alpha)\\
 &= \underset{v \in \bbR^{n+1}} \sup ~ \left\langle\left(\tilde u_1(x), ~\hdots, ~\tilde u_n(x), ~1 - \sum_{j=1}^n \tilde u_j(x)\right),\vl \right\rangle - \\
&\qquad \qquad\underset{\alpha \in \Delta_{n+1}^U} \sup ~ \iprod{\alpha}{\vl} - \rho(T \alpha)\\
    &= \underset{v \in \bbR^{n+1}} \sup ~ \iprod{\tilde u}{D^\top \vl} + \vl_{n+1} -\\ & \qquad \underset{\alpha \in \Delta_{n+1}^U} \sup ~ \left\langle\left(\alpha_1, ~\hdots, ~\alpha_n, ~1 - \sum_{j=1}^n \alpha_j\right),\vl\right\rangle -\\
    &\qquad\qquad \rho\left(\sum_{j=1}^{n} \alpha_j t^j + \left(1 - \sum_{j=1}^{n} \alpha_j\right) t^{n+1}\right)\\
    &= \underset{v \in \bbR^{n+1}} \sup ~ \iprod{\tilde u}{D^\top \vl} + \vl_{n+1} - \underset{\alpha \in \Delta_{n+1}^U} \sup ~ \vl_{n+1} + \iprod{\alpha}{D^\top \vl} - \rho(t^{n+1} + TD \alpha)
  \end{aligned}
\end{equation}
Since $D^\top$ is surjective, we can apply the substitution $\tilde v = D^\top \vl$:
\begin{equation}
  \begin{aligned}
    \hdots &= \underset{\tilde v \in \bbR^{n}} \sup ~ \iprod{\tilde u}{\tilde v} - \underset{\alpha \in \Delta_{n+1}^U} \sup ~ \iprod{\alpha}{\tilde v} - \rho(t^{n+1} + TD \alpha) \\
    &= \underset{\tilde v \in \bbR^{n}} \sup ~ \iprod{\tilde u}{\tilde v} - \underset{w \in \Delta} \sup ~ \iprod{(TD)^{-1}(w - t^{n+1})}{\tilde v} - \rho(w).
  \end{aligned}
\end{equation}
In the last step the substitution $w = t^{n+1} + TD \alpha \Leftrightarrow \alpha = (TD)^{-1} (w - t^{n+1})$ was performed. This can be further simplified to
\begin{equation}
  \begin{aligned}
    \hdots &= \underset{\tilde v \in \bbR^{n}} \sup ~ \iprod{\tilde u}{\tilde v} + \iprod{(TD)^{-1} t^{n+1}}{\tilde v} - (\rho + \delta_{\Delta})^*((TD)^{-T} \tilde v) \\
    &= \underset{\tilde v \in \bbR^{n}} \sup ~ \iprod{\tilde u + (TD)^{-1} t^{n+1}}{\tilde v} - (\rho + \delta_{\Delta})^*((TD)^{-T} \tilde v) \\
    &= \underset{\tilde v \in \bbR^{n}} \sup ~ \iprod{TD \tilde u + t^{n+1}}{(TD)^{-T} \tilde v} - (\rho + \delta_{\Delta})^*((TD)^{-T} \tilde v).
  \end{aligned}
\end{equation}
Since $TD$ is invertible we can perform another substitution 
$v' = (TD)^{-T} \tilde v$.
\begin{equation}
  \begin{aligned}
    \hdots &= \underset{v' \in \bbR^{n}} \sup ~ \iprod{TD \tilde u +
      t^{n+1}}{v'} - (\rho + \delta_{\Delta})^*(v') \\
    &= (\rho + \delta_{\Delta})^{**}(t^{n+1} + TD \tilde u).
  \end{aligned}
\end{equation}
The lifted regularizer is given as:
\begin{equation}
    \boldsymbol{R}(\ul) = \underset{\ql : \Omega \to \bbR^{d \times n+1}} \sup \int_{\Omega} \iprod{\ul}{\Div \ql} - \Psi^*(\ql) ~ \mathrm{d}x
\end{equation}
Using the parametrization by $\tilde u$, this can be equivalently written as
\begin{equation}
  \begin{aligned}
    \underset{\ql(x) \in \mathcal{K}} \sup \int_{\Omega} \sum_{j=1}^{n}\tilde u_j\Div (\ql_j - \ql_{n+1}) + \Div \ql_{n+1} ~ \mathrm{d}x,
  \end{aligned}
  \label{eq:reg_first_subst}
\end{equation}
where the set $\mathcal{K} \subset \bbR^{d \times n+1}$ can be written as
\begin{equation}
  \mathcal{K} = \{ \ql \in \bbR^{d \times n+1} ~|~ \norm{D^\top \ql^\top (TD)^{-1}}_{S^{\infty}} \leq 1\}.
\end{equation}
Note that since $\ql_{n+1} \in C_c^{\infty}(\Omega, \bbR^d)$, the last term $\Div \ql_{n+1}$ in \eqref{eq:reg_first_subst} vanishes by partial integration.
With the substituion $\tilde q(x) = D^\top \ql(x)^\top$ we have
\begin{equation}
  \begin{aligned}
    \underset{\tilde q \in \mathcal{ \tilde K} } \sup \int_{\Omega} \iprod{\tilde u}{\Div \tilde q} ~ \mathrm{d}x,
  \end{aligned}
\end{equation}
with set $\mathcal{\tilde K} \subset \bbR^{d \times n}$:
\begin{equation}
  \mathcal{\tilde K} = \{ q \in \bbR^{d \times n} ~|~ \norm{ q (TD)^{-1} }_{S^{\infty}} \leq 1\}.
\end{equation}
Note that since $\ql_i \in C_c^{\infty}(\Omega, \bbR^d)$, the same holds for the linearly transformed $\tilde q$. 
With another substituion $q'(x) = \tilde q(x) (TD)^{-1}$ we have
\begin{equation}
  \begin{aligned}
    \cdots&=\underset{q' \in \mathcal{K}'} \sup \int_{\Omega} \iprod{\tilde u}{\Div q' TD} ~ \mathrm{d}x\\
    &= \underset{q' \in \mathcal{K}'} \sup \int_{\Omega} \iprod{TD \tilde u}{\Div q'} ~ \mathrm{d}x
  \end{aligned}
\end{equation}
where the set $\mathcal{K}' \subset \bbR^{d \times n+1}$ is given as 
\begin{equation}
  \mathcal{K}' = \{ q \in \bbR^{d \times n} ~|~ \norm{ q }_{S^{\infty}} \leq 1\},
\end{equation}
which is the usual unlifted definition of the total variation $TV(t^{n+1} + TD \tilde u)$. 

This shows that the lifting method solves
\begin{equation}
  \underset{\tilde u : \Omega \to \Delta_e} \min ~ \int_{\Omega} (\rho(x, \cdot) + \delta_{\Delta})^{**}(t^{n+1} + TD \tilde u(x)) \mathrm{d}x + \lambda TV(t^{n+1} + TD \tilde u),
  \label{eq:unlifted_problem_shifted}
\end{equation}
which is equivalent to the original problem but with a convexified data term.
\end{proof}

\section{Additional Experiment: Adaptive Denoising}
In this experiment we jointly estimate the mean $\mu$ and variance
$\sigma$ of an image $I : \Omega \to \bbR$ according to a Gaussian model.
The label space is chosen as $\Gamma = [0, 255] \times [1, 10]$ and 
the dataterm as proposed in \cite{goldluecke-et-al-siam-2013}:
\begin{equation}
  \rho(x, \mu(x), \sigma(x))= \frac{(\mu(x) - I(x))^2}{2\sigma(x)^2} +
  \frac{1}{2}\log(2\pi \sigma(x)^2).
  \label{eq:adaptive_denoising}
\end{equation}
As the projection onto the epigraph of $(\rho
+ \delta_{\Delta})^*$ seems difficult to compute, 
we approximate $\rho$ by a piecewise linear function
using $29 \times 29$ sublabels and convexify it using the quickhull algorithm~\cite{barber1996quickhull}.  
In Fig.~\ref{fig:cartoon-text-decomp} we show the result of minimizing
\eqref{eq:adaptive_denoising} with total variation regularization.
\figCartoonTextDecomp
\end{appendix}

\bibliographystyle{splncs03}
\bibliography{references}
\end{document}